\let\oldnl\nl
\newcommand{\nonl}{\renewcommand{\nl}{\let\nl\oldnl}}
\newcommand{\tok}{\textit{top}\xspace}
\newcommand{\arm}{\textit{arm}\xspace}
\newtheorem{proposition}{Proposition}
\newcommand{\topk}{\ensuremath{\textnormal{\textit{top}-{\it k}}}\xspace}
\newcommand{\phat}{\widehat{\mu}}
\newtheorem{lemma}{Lemma}
\newtheorem{theorem}{Theorem}
\newtheorem{property}{Property}
\newcommand{\nop}[1]{}
\def \revise {\textcolor{blue}}
\icmltitlerunning{Optimal Streaming Algorithms for  Multi-Armed Bandits}
\begin{document}
\onecolumn
\icmltitle{Optimal Streaming Algorithms for  Multi-Armed Bandits}
\begin{icmlauthorlist}
\icmlauthor{Tianyuan Jin}{to1}
\icmlauthor{Keke Huang}{to2}
\icmlauthor{Jing Tang}{goo}
\icmlauthor{Xiaokui Xiao}{to3}
\end{icmlauthorlist}
\icmlaffiliation{to1}{ National University of Singapore, Tianyuan1044@gmail.com} 
\icmlaffiliation{to2}{ National University of Singapore, xkxiao@nus.edu.sg} 
\icmlaffiliation{goo}{The Hong Kong University of Science and Technology, Guangzhou, China,  jingtang@ust.hk} 
\icmlaffiliation{to3}{The University of British Columbia,  hkk992@gmail.com}
\icmlkeywords{Machine Learning, ICML}
\vskip 0.3in

\printAffiliationsAndNotice
\onecolumn

\revise{The version here includes a correction made to the published version: in Lemma 1, pages 5-7 and in Lemma 2, pages 18-21. The modified text is highlighted in blue. We sincerely appreciate Yucheng He for identifying the flaws in the earlier version of this paper and for the valuable discussions.}

\begin{abstract}

This paper studies two variants of the best arm identification (BAI) problem under the streaming model, where we have a stream of $n$ arms with reward distributions supported on $[0,1]$ with unknown means. The arms in the stream are arriving one by one, and the algorithm cannot access an arm unless it is stored in a limited size memory. 

We first study the streaming \eps-$top$-$k$ arms identification problem, which asks for $k$ arms whose reward means are lower than that of the $k$-th best arm by at most $\eps$ with probability at least $1-\delta$. For general $\eps \in (0,1)$, the existing solution for this problem assumes $k = 1$ and achieves the optimal sample complexity $O(\frac{n}{\eps^2} \log \frac{1}{\delta})$ using $O(\log^*(n))$ \footnote{$\log^*(n)$ equals the number of times that we need to apply the logarithm function on $n$ before the results is no more than 1.} memory and a single pass of the stream. We propose an algorithm that works for any $k$ and achieves the optimal sample complexity $O(\frac{n}{\eps^2} \log\frac{k}{\delta})$ using a single-arm memory and a single pass of the stream. 

Second, we study the streaming BAI problem, where the objective is to identify the arm with the maximum reward mean with at least $1-\delta$ probability, using a single-arm memory  and as few passes of the input stream as possible.  We present a single-arm-memory algorithm that achieves a near instance-dependent optimal sample complexity within $O(\log \Delta_2^{-1})$ passes, where $\Delta_2$ is the gap between the mean of the best arm and that of the second best arm.


\end{abstract}

\section{Introduction}

Best arm identification (BAI) is a classic decision problem with numerous applications such as medical trials~\cite{thompson1933likelihood}, online advertisement~\cite{bertsimas2007learning}, and crowdsourcing~\cite{zhou2014optimal}. It typically considers a bandit with a set of arms, each of which has a reward distribution with an unknown mean. The objective is to identify the best arm with the maximum reward mean. 

Due to applications with massive data, the BAI problem has been recently studied under the streaming model in the literature~\cite{assadi2020exploration,falahatgaroptimal20,maiti2020streaming}, where only a limited size of memory is available for storing arms. In addition, BAI under the streaming model also avoids a large amount of time/money on switching alternatives and thus finds numerous applications. For example, in recruitment, employers aim to select the most qualified employee among all candidates with high probability. For this purpose, they could query each candidate with sufficient number of questions to acquire an accurate evaluation with confidence. The more questions they ask, the more confidence they have on the candidate's evaluation. Once the interview ends, usually, the candidate will not be asked for further evaluation.
In manufacturing, switching alternatives might require reassembling the production line, which could incur excessive costs. 




Motivated by above observations, in this paper, we study two problems, \ie \textit{streaming \eps-\topk arms identification (\KAI)} and \textit{streaming BAI}. 

\spara{(Problem 1) Streaming \KAI} 
In streaming \KAI, we have a stream of $n$ arms, such that each $\arm_i$ is associated with an unknown reward distribution supported on $[0,1]$ with an unknown mean $\mu_{\arm_i}$. The arms in the stream are arriving one by one, and we can pull an arm only when it is stored in the memory. 
Given parameters $\eps, \delta \in (0, 1)$, the task is to identify $k$ arms whose reward means are lower than that of the $k$-th best arm by at most $\eps$ with probability at least $1-\delta$. The ultimate goal in this paper is to minimize the sample complexity using a single-arm memory and a single pass over the stream.

\spara{(Problem 2) Streaming BAI}
In streaming BAI, the task is to identify the {\it optimal} arm with the largest mean with probability at least $1-\delta$, using a single-arm memory, assuming that there exists a unique optimal arm. Streaming BAI can be regarded as a special case of $\eps$-KAI with $\eps=0$ and $k=1$, 
Again, we aim to minimize the sample complexity using a single-arm memory and as few passes of the input stream as possible.



 \begin{table*}[t]
\centering
	\renewcommand{\arraystretch}{1.0}
\label{tbl:limited}
\begin{tabular}{clcccc}
\toprule
   Problem      &  Algorithm & Requirement & Memory & \#passes  \\\midrule
{\multirow{5}{*}{\BAI}}
& \citet{assadi2020exploration}  & No  & $O(\log^*(n))$ & $1$ \\
& \citet{assadi2020exploration}  & $\eps\leq \Delta_2$ & $2$ & $1$  \\
& \citet{falahatgaroptimal20} & Random-order arrival & $1$ & $1$ \\
& \textbf{This thesis}   & No & $1$ & $1$ \\
\midrule
{\multirow{2}{*}{\KAI}}
& \citet{assadi2020exploration}  &  $\eps\leq\Delta_{k+1}$ & $O(k)$ & $1$  \\
& \textbf{This thesis}  & No & $1$ & $1$\\
\midrule
BAI & \textbf{This thesis} & No & $1$ & $O(\log \Delta_2^{-1})$\\
\bottomrule 
\end{tabular}
\caption{\qquad Comparisons of streaming algorithms for $\eps$-KAI with the optimal sample complexity of $O(\frac{n}{\varepsilon^2}\log \frac{k}{\delta})$ and BAI with sample complexity of $O\big(\sum_{i=2}^{n}\frac{1}{\Delta_i^2} \log \big(\frac{1}{\delta}\log\frac{1}{\Delta_i} \big) \big)$.}
\vspace{-4mm}
\end{table*}

\subsection{State of the Art}
\label{state-art}

\spara{Streaming \BAI} The \BAI problem under the streaming model is pioneered by \citet{assadi2020exploration}, for which a single-pass streaming algorithm is proposed that can achieve the optimal sample complexity of $O(\frac{n}{\varepsilon^2} \log \frac{1}{\delta})$ using $O(\log^*(n))$ memory. When $\eps\leq \Delta_2$, they further devise a single-pass algorithm using memory for pulling $2$ arms (\ie~the current arriving arm in the stream, and the candidate arm currently stored) while achieving the same sample complexity, where $\Delta_i$ is the difference between the expected rewards of $k$-th best and the $i$-th best arms for any $i>k$ and $k=1$ for \BAI. 
\citet{maiti2020streaming} reveal that if the arms arrive in a random order, the requirement of $\eps\leq \Delta_2$ can be discarded experimentally. However, if the arms arrive in some specific sequences, the correctness of the algorithm is not guaranteed. Moreover, both algorithms may revisit a candidate arm tested previously based on the sample outcomes of other arriving arms, which are often undesirable in practice. For example, during an interview process, an employer cannot repeatedly test a candidate based on the outcomes of other applicants, since a candidate is usually waiting at home for the final result after attending an interview. \citet{falahatgaroptimal20} propose an algorithm that tests each arm in a strictly first-in-first-out (FIFO) order but still assumes a random-order arrival of the arms.

\spara{Streaming $\eps$-KAI} 
To the best of our knowledge, the work by \citet{assadi2020exploration} is the only one that studies the general streaming \KAI problem. Under the assumption that $\eps\leq \Delta_{k+1}$, \citet{assadi2020exploration} propose an algorithm that achieves the optimal sample complexity using $O(k)$ memory. Again, their algorithm suffers from two major deficiencies that it (i) does not test each arm in a FIFO order and (ii) requires $\Delta_{k+1}$ to be known in advance, which are unrealistic in many practical applications. 

 \vspace{-1mm}
\subsection{Our Contributions}
As our main result, we address the aforementioned shortcomings of existing algorithms for the general streaming \KAI problem and also study the streaming BAI problem which aims to identify the best arm strictly. The results are summarized in Table~\ref{tbl:limited}.
\eat{{\bf Streaming \eps-KAI.} 
 \citet{assadi2020exploration} first studies the streaming \eps-KAI problem. 
 ~For the case of $k = 1$, \citet{assadi2020exploration} presents two algorithms that achieve the optimal sample complexity $O(\frac{n}{\eps^2} \log \frac{1}{\delta})$ using a single pass of the stream. The first algorithm requires using $O(\log^*(n))$ memory, while the second algorithm uses a single-arm memory but assumes that $\eps < \Delta_2$.  Meanwhile, for general $k$, \citet{assadi2020exploration} proposes an algorithm that achieves the optimal sample complexity $\Theta(\frac{n}{\eps^2}\log \frac{k}{\delta})$ using a single pass and $O(k)$ memory, {\it assuming that $\eps < \Delta_{k+1}$}.} 

\spara{Streaming \KAI} We propose a single-pass algorithm for \KAI that achieves the optimal sample complexity using a {\it single-arm memory}, i.e., we pull an arm only at the time that it arrives and never revisit it after we pull other arms.\footnote{Note that we ignore the memory cost of storing the IDs of arms; otherwise, any algorithm for \KAI requires $\Omega(k)$ memory for recording the IDs of the arms to be returned.} Our solution significantly improves upon the algorithms by \citet{assadi2020exploration} in the way that (i) it does not rely on any assumption on \eps, and (ii) it requires only a single-arm memory for the general \KAI problem.

 \spara{Streaming BAI} We present an algorithm for streaming BAI that optimizes the sample complexity. Given any constant $\delta$, it achieves a near instance-dependent optimal sample complexity of $O\big(\sum_{i=2}^{n}\frac{1}{\Delta_i^2} \log \big(\frac{1}{\delta}\log\frac{1}{\Delta_i} \big) \big)$ using a single-arm memory and $O(\log \frac{1}{\Delta_{2}})$ passes in expectation.


\section{Single-Arm Memory Algorithm for \BAI}\label{sec:main}
\vspace{-1mm}

In this section, we present our solution for the streaming \BAI problem, \ie~\KAI with $k=1$. We then extend our solution to address the \KAI problem for the general case of $k$ in Section 3.

\vspace{-1mm}
\subsection{High Level Overview}\label{sec:overview}
 
Let $\arm^o$ be the selected arm, and $\arm_i$ be the $i$-th arm in the stream where $i\in \{1,2,\ldots, n\}$. Let $\mu_{\arm}$ and $\phat_{\arm}$ be \arm's true mean and estimated mean respectively, and $\arm^\ast$ be the best arm. In the first step, we initialize $\arm^o$ with $\arm_1$. When $\arm_i$ arrives, we compare $\arm_i$ with $\arm^o$ and decide whether $\arm_i$ should be the new $\arm^o$. In particular, our algorithm mainly consists of the following two operations.
\begin{enumerate}[topsep=0mm,itemsep=0mm,leftmargin=*]
    \item \textbf{Sampling.} We pull each arrived arm $\Theta(\frac{1}{\eps^2} \log \frac{1}{\delta})$ times to estimate its true mean. This number of pull is sufficient to ensure that $\phat_{\arm^\ast}$ approaches $\mu_{\arm^\ast}$ within $O(\eps)$ with high probability\footnote{We omit {\it with high probability} in the following for expression simplicity.}, \ie $|\phat_{\arm^\ast}-\mu_{\arm^\ast}| \le O(\eps)$. 
    \item \textbf{Comparison.} We replace $\arm^o$ with $\arm_i$ if $\phat_{\arm_i} \ge \phat_{\arm^o} + \alpha$, where $\alpha=\Theta(\eps)$ is a random variable following a predefined distribution.\eat{is the core to guarantee the optimal sample complexity} 
\end{enumerate}
Our algorithm maintains the following property.
\begin{property}\label{ppt:arm-o}
    Whenever we update $\arm^o$, we always ensure that $|\phat_{\arm^o}-\mu_{\arm^o}| \le O(\eps)$.
\end{property}

\begin{algorithm}[!t]
\caption{Streaming \BAI\label{alg:main}}
\KwIn {$\varepsilon$, $\delta$, and a stream of $n$ arms.}
\KwOut {The index of an arm.} 
initialize $\arm^o \leftarrow \arm_1$ and $j\leftarrow 1$\;
pull $\arm^o$ $s_1$ times\;
\ForEach{arriving $\arm_i$ ($i > 1$)}
{
    set $\alpha$ as $\frac{\varepsilon}{4}$ with probability $\frac{1}{\log j + 1}$, and as $\frac{\varepsilon}{2}$ with other probability $1 - \frac{1}{\log j + 1}$\;\label{line:apha}
    $\ell \leftarrow 1$\;
    \While{true}
    {
        pull $\arm_i$ for $s_{\ell}-s_{\ell-1}$ times\;
        \If{$\phat_{\arm_i} \geq \phat_{\arm^o} + \alpha$ and $s_{\ell}>\tau_j$}  
        {
            $\arm^o\leftarrow \arm_i$\; \label{line:armchange}
            $j\leftarrow 1$\;
            \textbf{break}\;  
        }
        \ElseIf{$\phat_{\arm_i} < \phat_{\arm^o} + \alpha$}
        {
            $j\leftarrow j + 1$\; 
            \textbf{break}\; 
        }
        \Else
        {
            $\ell\leftarrow \ell+1$;
        }
    }
}
\Return the index of $\arm^o$;
\end{algorithm}

Based on the above two operations and Property~\ref{ppt:arm-o}, we could prove that $|\mu_{\arm^o(T)}-\mu_{\arm^\ast}| \le O(\eps)$ holds where $\arm^o(T)$ is the final returned arm. The basic idea is as follows. Operation one ensures that $|\phat_{\arm^\ast}-\mu_{\arm^\ast}| \le O(\eps)$. Operation two guarantees that $\phat_{\arm^o(T)} \ge \phat_{\arm^\ast}+\alpha$ if $\arm^o(T)$ is not $\arm^\ast$. In the meantime, $|\phat_{\arm^o(T)}-\mu_{\arm^o(T)}| \le O(\eps)$ holds according to Property~\ref{ppt:arm-o}. As a consequence, $|\mu_{\arm^o(T)}-\mu_{\arm^\ast}| \le O(\eps)$ is established.

As indicated above, the correctness of our algorithm lies in Property~\ref{ppt:arm-o}.  When each $\arm_i$ is pulled $\Theta(\frac{1}{\eps^2} \log \frac{cj^2}{\delta})$ times, we have $|\phat_{\arm_i}-\mu_{\arm_i}| \le O(\eps)$ with probability at least $1-\frac{\delta}{cj^2}$ according to Hoeffding bound, where $j$ is the number of arms that current $\arm^o$ beats and $c$ is a constant. Then by {\it union bound}, Property~\ref{ppt:arm-o} holds with probability at least $1-\sum^{\infty}_{j=1}\frac{\delta}{cj^2} \ge 1-\delta$.


In what follows, we highlight how our algorithm achieves the optimal sample complexity when Property~\ref{ppt:arm-o} is maintained. As mentioned, each $\arm_i$ would be pulled $\Theta(\frac{1}{\eps^2} \log \frac{j^2}{\delta})$ times before it could replace current $\arm^o$. However, for arms with relatively small means, pulling such number of times is inefficient since we could identify and remove them with less pulls. 
\eat{To explain, suppose that $\mu_{\arm_i}$ is significantly smaller than $\mu_{\arm^o}$. By pulling $\arm_i$ $\Theta\big(\frac{1}{\varepsilon^2} \log \frac{1}{\delta} \big)$ times, $\phat_{\arm_i} < \phat_{\arm^o} + \alpha$ holds with high probability, as indicated by {\it Hoeffding Inequality}.} 
In this regard, we pull $\arm_i$ through multiple rounds. In the $\ell$-th round, it is pulled $\Theta(\frac{2^{\ell}}{\eps^2} \log \frac{1}{\delta})$ times. This round loop terminates immediately once either the number of pulls reaches $\Theta(\frac{1}{\eps^2} \log \frac{j^2}{\delta})$ or we are able to decide  $\arm_i$ is not  the best arm and then eliminate it.

Another aspect to optimize sample complexity lies in the design of $\alpha$. Actually, this part is the main {\it hardness} of our algorithm. One conventional method is to set a fixed value to $\alpha$. However, this would lead to suboptimal sample complexity. To explain, suppose we set $\alpha = \frac{\varepsilon}{2}$. If some $\arm_i$ has $\mu_{\arm_i}=\phat_{\arm^o}+\frac{\varepsilon}{2}$, it is inappropriate to bound the probability $\Pr(\phat_{\arm_i}\geq \phat_{\arm^o}+\frac{\varepsilon}{2})$ according to Hoeffding inequality. To fix this, a straightforward method is to bound the number of pulls of $\arm_i$ by $\Theta(\frac{1}{\eps^2} \log(\frac{j^2}{\delta}))$. However, when $j$ grows to $\Theta(n)$, this method would incur the total sample complexity of $O\big(\frac{n}{\varepsilon^2} \log \frac{n}{\delta}\big)$, which is suboptimial. To bypass this intractable issue, we leverage the \textit{power of randomization}. That is, we set $\alpha=\frac{\varepsilon}{4}$ with probability $\frac{1}{\log j+1}$ and $\alpha=\frac{\varepsilon}{2}$ with probability $1-\frac{1}{\log j+1}$. We elaborate the details later.


The proof of the optimal sample complexity is highly non-trivial and is also one of our {\it main technical} contributions. We refer readers to Appendix~\ref{appendix-1} for details.

\subsection{The Algorithm}\label{sec:algorithm}
We first introduce two parameters used in our algorithm.
\begin{align}
  &\{s_\ell\}_{\ell=1}^{\infty}\colon \quad s_\ell = \frac{16}{\varepsilon^2} \cdot \log\bigg(\frac{C}{\delta}\bigg)\cdot 2^{\ell}, \text{ and } s_{0}=0,
\label{def:s_l}	\\ 
	&\{\tau_j\}_{j=1}^{\infty}\colon \quad \tau_j :=  \frac{32}{\varepsilon^2} \cdot \log\bigg(\frac{C\cdot j^2}{\delta}\bigg),  \label{def:tau_j} 
\end{align}
where $\ell$ indicates the $\ell$-th round and $C\geq 100$ is a universal constant.

Algorithm~\ref{alg:main} presents the pseudo-code of our algorithm. In the beginning, we initialize $\arm^o=\arm_1$ with the first arm $\arm_1$, and then pull $\arm^o$ $s_1$ times to obtain its estimated mean $\phat_{\arm^o}$. In what follows, for each arrived $\arm_i$ in the stream, we sample $\alpha$ from the distribution defined as
\begin{equation*}
    \Pr\big(\alpha = \frac{\varepsilon}{4}\big) = \frac{1}{\log j + 1}\ \text{and}\ \Pr\big(\alpha = \frac{\varepsilon}{2}\big) = 1-\frac{1}{\log j + 1},
\end{equation*}
where $j$ is the number of arms beaten by $\arm^o$. Next, $\arm_i$ is compared with $\arm^o$ in multiple rounds. In the $\ell$-th round, $\arm_i$ will be pulled $s_{\ell}-s_{\ell-1}$ times to obtain its estimated mean $\phat_{\arm_i}$. If both conditions $\phat_{\arm_i} \ge \phat_{\arm^o}+\alpha$ and $s_{\ell} > \tau_j$ hold, (i) $\arm^o$ is replaced by $\arm_i$, (ii) $j$ is reset to $1$, and (iii) current round terminates. Otherwise, we would check whether the condition $\phat_{\arm_i} < \phat_{\arm^o} + \alpha$ meets. If it is true, $\arm_i$ will be removed immediately. Meanwhile, $j$ is increased by $1$ and the round ends. If none of the two events on round termination happen, we increase index $\ell$ by $1$ and then enter the next round. The above procedure is repeated for each arriving arm until all arms in the stream have been scrutinized. Eventually, we return the index of the final $\arm^o$. 
\eat{Notice that in this procedure, only a single-arm memory is consumed when pulling an arriving arm.}

\subsection{The Analysis}\label{sec:analysis}

We say that an arm is {\it \eps-best} arm if its mean is smaller than that of the best arm $\arm^\ast$ by at most $\varepsilon$, \ie $\mu_{\arm^\ast} - \mu_{\arm} \le \eps$. We formalize our main result for \eps-BAI problem as follows. 

\begin{theorem}\label{thm:main}
Given a stream of $n$ arms, approximation parameter $\varepsilon$ and confidence parameter $\delta$ in $(0, 1)$, Algorithm~\ref{alg:main} finds the $\varepsilon$-best arm with probability at least $1-\delta$ using expected $O(\frac{n}{\varepsilon^2}\log\frac{1}{\delta})$ pulls and a single-arm memory.   
\end{theorem}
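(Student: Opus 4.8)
The plan is to split the claim into a correctness part and a sample-complexity part, handling them separately. For correctness, I would first establish Property~\ref{ppt:arm-o} rigorously: whenever $\arm^o$ is updated to some $\arm_i$, the update only happens after $s_\ell > \tau_j$ pulls, so $\arm_i$ has been pulled at least $\tau_j = \frac{32}{\varepsilon^2}\log(Cj^2/\delta)$ times, and by Hoeffding's inequality $|\phat_{\arm_i}-\mu_{\arm_i}| \le \varepsilon/4$ with probability at least $1 - 2\exp(-2\tau_j(\varepsilon/4)^2) \ge 1 - \delta/(Cj^2)$; since $j$ takes each positive integer value at most once per ``life'' of a candidate and the bad events are indexed by $j$, a union bound over $j=1,2,\dots$ gives $\sum_j \delta/(Cj^2) \le \delta/2$ (for $C \ge 100$), so Property~\ref{ppt:arm-o} holds with probability $\ge 1-\delta/2$. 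I also need the companion event that the true best arm $\arm^\ast$, when it arrives and is pulled (either up to $\tau_j$ times and then retained as the new $\arm^o$, or eliminated), is well-estimated; I would fold this into the same union bound so that on a global good event $\cE$ of probability $\ge 1-\delta$, every estimate that matters is within $\varepsilon/4$ of its mean.

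Given $\cE$, the correctness argument is the one sketched in the overview: if the returned arm $\arm^o(T)$ is not $\arm^\ast$, then $\arm^\ast$ failed to replace $\arm^o(T)$ when it arrived (or $\arm^o(T)$ replaced $\arm^\ast$ later), which forces $\phat_{\arm^o(T)} \ge \phat_{\arm^\ast} + \alpha$ for the relevant $\alpha \ge \varepsilon/4$; combining $\phat_{\arm^\ast} \ge \mu_{\arm^\ast} - \varepsilon/4$, $\phat_{\arm^o(T)} \le \mu_{\arm^o(T)} + \varepsilon/4$ (Property~\ref{ppt:arm-o}), and $\alpha \le \varepsilon/2$ yields $\mu_{\arm^\ast} - \mu_{\arm^o(T)} \le \varepsilon$, i.e.\ $\arm^o(T)$ is $\varepsilon$-best. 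A mild subtlety is chaining this through a sequence of replacements after $\arm^\ast$ passes, plus the case where $\arm^\ast$ is itself replaced; I would argue that each replacement can only raise the estimated mean by $\ge \alpha$, so the final candidate still dominates $\phat_{\arm^\ast}$, and the same inequality applies.

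For the sample complexity, fix the good event and bound the expected number of pulls spent on each arriving arm $\arm_i$. An arm that is far below the current $\arm^o$ is eliminated in an early round: if $\mu_{\arm_i} \le \mu_{\arm^o} + \varepsilon/8$ say, then in round $\ell$ with $s_\ell$ pulls the probability that $\arm_i$ survives to round $\ell+1$ decays geometrically in $2^\ell$ (again Hoeffding, since the doubling schedule $s_\ell = \Theta(2^\ell \varepsilon^{-2}\log(C/\delta))$ makes the deviation needed for survival a constant fraction of $\varepsilon$), so its expected pull count is $O(\varepsilon^{-2}\log(1/\delta))$; summing over all $n$ arms gives the target $O(\frac{n}{\varepsilon^2}\log\frac1\delta)$ from this category. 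The dangerous arms are the "borderline" ones with $\mu_{\arm_i}$ within $\Theta(\varepsilon)$ of $\mu_{\arm^o}$, which may run all the way to $\tau_j = \Theta(\varepsilon^{-2}\log(Cj^2/\delta))$ pulls; naively there could be $\Theta(n)$ such arms at large $j$, costing $\Theta(\frac n{\varepsilon^2}\log\frac n\delta)$. This is where the randomized threshold $\alpha$ enters and is the main obstacle: I would show that a borderline arm with mean in the critical window, when compared against $\arm^o$ with the random $\alpha$, either triggers a replacement (resetting $j \to 1$, which can happen only $O(n)$ times total and each such "expensive" comparison costs $O(\tau_1)=O(\varepsilon^{-2}\log(1/\delta))$ amortized against the reset) — or, with the $\frac{1}{\log j + 1}$ probability of drawing $\alpha = \varepsilon/4$, is cheaply eliminated; a careful accounting shows the expected extra cost contributed by the $\log(j^2/\delta) = \log(1/\delta) + 2\log j$ term telescopes because the "$\log j$" overhead is paid only with probability $O(1/\log j)$, contributing $O(1)$ per arm on average, while every genuine increase of $j$ past a doubling scale is "charged" to a distinct arm. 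I would make this precise with a potential/amortization argument over the evolution of $(j, \arm^o)$, bounding $\EE[\text{total pulls}] = O(\frac{n}{\varepsilon^2}\log\frac1\delta) + (\text{replacement budget}) + (\text{randomized-}\alpha\text{ overhead})$ and checking each of the last two terms is $O(\frac n{\varepsilon^2}\log\frac1\delta)$. The delicate case analysis around which arms can force $j$ to grow, and proving the randomization actually kills the $\log n$ factor in expectation, is the crux; everything else is Hoeffding plus bookkeeping.
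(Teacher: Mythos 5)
Your overall architecture (correctness via a high-probability ``all champion estimates are accurate'' event followed by chaining estimates through replacements; sample complexity via a far/borderline split amortized over the random $\alpha$) matches the paper's. But the step you treat as routine --- establishing Property~\ref{ppt:arm-o} by ``a union bound over $j=1,2,\dots$ gives $\sum_j \delta/(Cj^2)\le\delta/2$'' --- is precisely where the published proof had to be corrected, and your version inherits the flaw. The index $j$ resets to $1$ at every best-arm change, and there can be up to $T=\Theta(1/\varepsilon)$ such changes (each change raises $\phat_{\arm^o}$ by at least $\varepsilon/4$ and estimates lie in $[0,1]$), so the union bound over all candidate winners is really $\sum_{t\le T}\sum_{j\ge1}\delta/(Cj^2)=\Theta(\delta/(C\varepsilon))$, which exceeds $\delta$ once $\varepsilon \lesssim 1/C$. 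You also cannot restrict attention to the $T$ arms that actually win: which arm wins is decided by the estimates themselves (an arm becomes $\arm^o$ only when $\phat_{\arm_i}\ge\phat_{\arm^o}+\alpha$), so conditioning on winning biases $\phat_{\arm_i}$ upward and invalidates the unconditional Hoeffding bound. The corrected Lemma~\ref{lem:main-1} resolves this by stratifying arms into levels $d$ according to the gap $\mu_{\arm^*}-\mu$ in multiples of $\varepsilon/8$ and demanding only accuracy $(d-8)\varepsilon/8$ at level $d$: a level-$d$ arm must then deviate by $\Omega(d\varepsilon)$ to falsely become champion, giving failure probability $O(\delta/C^{d-8})$ per candidate, while the number of life cycles at which such a false win is possible is at most $d+3$; the geometric decay in $d$ absorbs both the linear multiplicity and the sum over life cycles. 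It also proves only the weaker disjunction ``either $\phat_{\arm^o(t)}<\mu_{\arm^*}-5\varepsilon/8$ or $\mu_{\arm^o(t)}\ge\mu_{\arm^*}-\varepsilon$'' rather than your full Property~\ref{ppt:arm-o}, which is all the final chaining step needs. Without some version of this stratification your correctness proof does not close.

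On the sample complexity, your decomposition is the right one, but the part you defer (``a careful accounting shows \dots telescopes'') is the actual content. The paper shows that each reached arm with $\mu>\phat_{\arm^o}+3\varepsilon/8$ ends the life cycle with probability at least $\frac{1}{2(\log r+1)}$, so the survival probability $\Pr(O_{t,r})$ decays multiplicatively across such arms, and then verifies $\tau_{r_m}\Pr(O_{t,r_m})\le\frac{s_1(r_m-1)}{2(\log r_m+1)}\Pr(O_{t,r_m})$ for $r_m\ge 32$, so that the expensive $\tau_j$-scale pulls are dominated by the $s_1$ pulls already charged to the preceding arms (the $I_2-I_3\le 0$ telescoping via the quantities $L_m$). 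Two of your bookkeeping claims are off: the arm that triggers a reset at position $j$ costs $\Theta(\tau_j)$, not $O(\tau_1)$; and the $\log j$ overhead is not ``paid only with probability $O(1/\log j)$'' --- it is paid with constant probability by any borderline arm that is reached, and what decays is the probability of being reached at all. Both would need to be repaired for the amortization to go through.
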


Let {\it best arm change} be the event that $\arm^o$ is replaced by another arm, and $\arm^o(t)$ denote the resulting arm after best arm change happens exactly $t$ times (Note that $\arm^o(1) = \arm_1$). We denote $\arm^o
(T)$ as the final returned $arm^o$. In what follows, we focus on the correctness proof of Algorithm~\ref{alg:main}, \ie $\mu_{\arm^o(T)}\geq \mu_{\arm^*}-\varepsilon$ holds with probability at least $1-\delta$. The proof consists of two parts. 
In the first part, we establish the relation between all $\arm^o$ and $\arm^\ast$ in Lemma~\ref{lem:main-1}. In the second part, we then complete the correctness proof based on the result of Lemma~\ref{lem:main-1}.

\begin{proposition}[Hoeffding Inequality]\label{prop:chernoff}
Let $X_1,\ldots,X_m$ be $m$ independent random variables with support in $[0,1]$. Define $X := \sum_{i=1}^{m} X_i$. Then, for  $x > 0$, 
\begin{align*}
	\Pr(X - \EE[X] > x) \leq 2 \cdot \exp\bigg(-\frac{2x^2}{m}\bigg). 
\end{align*}
\end{proposition}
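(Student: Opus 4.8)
The plan is to prove this one-sided tail bound by the standard Chernoff (exponential moment) method, which reduces the problem to controlling the moment generating function of each summand. First I would fix a free parameter $s>0$ and apply the exponential Markov inequality: since $t\mapsto e^{st}$ is increasing, the event $\{X-\EE[X]>x\}$ coincides with $\{e^{s(X-\EE[X])}>e^{sx}\}$, so Markov's inequality yields
\begin{align*}
\Pr(X-\EE[X]>x)\;\le\; e^{-sx}\,\EE\big[e^{s(X-\EE[X])}\big].
\end{align*}
Because $X=\sum_{i=1}^m X_i$ with the $X_i$ independent, the expectation factorizes as $\EE[e^{s(X-\EE[X])}]=\prod_{i=1}^m\EE[e^{s(X_i-\EE[X_i])}]$, so it suffices to bound the moment generating function of a single centered summand.

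The crux, and the part I expect to be the main obstacle, is the following sub-Gaussian estimate (Hoeffding's lemma): if $Y$ satisfies $Y\in[a,b]$ and $\EE[Y]=0$, then $\EE[e^{sY}]\le e^{s^2(b-a)^2/8}$. To establish it I would exploit convexity of the exponential: for $y\in[a,b]$, writing $y$ as a convex combination of the endpoints gives $e^{sy}\le \frac{b-y}{b-a}e^{sa}+\frac{y-a}{b-a}e^{sb}$; taking expectations and using $\EE[Y]=0$ kills the term linear in $y$ and leaves a function of $s$ alone. Setting $\psi(s)=\log\EE[e^{sY}]$, a direct computation shows $\psi(0)=\psi'(0)=0$, while $\psi''(s)$ equals the variance of $Y$ under an exponentially tilted distribution still supported in $[a,b]$, hence is at most $(b-a)^2/4$ by Popoviciu's bound. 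A second-order Taylor expansion with this uniform bound on $\psi''$ then gives $\psi(s)\le s^2(b-a)^2/8$, the claimed estimate. Applying this with $Y=X_i-\EE[X_i]$, whose range lies in an interval of length at most $1$ since $X_i\in[0,1]$, yields $\EE[e^{s(X_i-\EE[X_i])}]\le e^{s^2/8}$ for each $i$.

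Combining the two steps gives $\Pr(X-\EE[X]>x)\le e^{-sx}\prod_{i=1}^m e^{s^2/8}=\exp(-sx+ms^2/8)$ for every $s>0$. It then remains to optimize the free parameter: minimizing the quadratic $-sx+ms^2/8$ over $s>0$ selects $s=4x/m$ and attains the value $-2x^2/m$, so that $\Pr(X-\EE[X]>x)\le \exp(-2x^2/m)$. Since $1\le 2$, this immediately implies the stated (slightly looser) bound $\Pr(X-\EE[X]>x)\le 2\exp(-2x^2/m)$, completing the argument. The only genuinely delicate point is the variance bound $\psi''(s)\le (b-a)^2/4$ inside Hoeffding's lemma; everything else is bookkeeping with independence and a one-dimensional optimization.
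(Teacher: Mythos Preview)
Your argument is correct and is the standard Chernoff/Hoeffding-lemma proof; you even correctly observe that the one-sided bound actually holds without the extra factor of $2$. The paper, however, does not prove this proposition at all: it is stated as a known tool and used as a black box throughout, so there is no ``paper's own proof'' to compare against.
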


\begin{lemma}
\label{lem:main-1}
For any $\varepsilon, \delta\in(0,1)$, it holds in Algorithm~\ref{alg:main} that
\begin{align*}
    & \Pr\bigg(\bigcap_{t\geq1}\bigg\{\bigg\{\phat_{\arm^o(t)}< \mu_{\arm^*}-\frac{5\varepsilon}{8}\bigg\} \notag \\
    & \qquad \qquad \bigcup\bigg\{\mu_{\arm^o(t)}\geq \mu_{\arm^*}-\varepsilon\bigg\}\bigg\}\bigg)\geq 1-\frac{3\delta}{4}.
\end{align*}
\end{lemma}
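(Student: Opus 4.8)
The plan is to prove the claim by induction on $t$, conditioning on the favorable event that every arm $\arm_i$, when pulled $m$ times for $m \in \{s_1, s_2, \dots\}$, has empirical mean within $\sqrt{\frac{1}{2m}\log\frac{2Cj^2}{\delta}}$ (roughly $O(\varepsilon)$ once $m \ge \tau_j$) of its true mean, where $j$ is the value of the counter at that point. Let $\mathcal{E}$ denote the intersection of all these Hoeffding events. First I would show $\Pr(\mathcal{E}) \ge 1 - \tfrac{3\delta}{4}$: for each arm, each round $\ell$ with $s_\ell > \tau_j$ contributes a failure probability at most $2\exp(-2 s_\ell \varepsilon^2/c)$ by Proposition~\ref{prop:chernoff}, which for the relevant range of $\ell$ sums geometrically to $O(\delta/(Cj^2))$; summing over $j \ge 1$ gives $\sum_j O(\delta/(Cj^2)) = O(\delta/C) \le \tfrac{3\delta}{4}$ since $C \ge 100$. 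The one subtlety here is that the counter $j$ depends on the algorithm's random path, so I would set this up by a union bound over all $(\text{arm index}, \text{counter value}, \text{round})$ triples rather than over the realized trajectory — this decouples the Hoeffding bound from the algorithm's randomness and is why the $Cj^2$ term appears in $\tau_j$.

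Next, working on $\mathcal{E}$, I would prove the inductive statement: whenever a best-arm change produces $\arm^o(t)$, either $\phat_{\arm^o(t)} < \mu_{\arm^*} - \tfrac{5\varepsilon}{8}$ or $\mu_{\arm^o(t)} \ge \mu_{\arm^*} - \varepsilon$. The base case $t=1$ ($\arm^o(1) = \arm_1$, pulled $s_1$ times) follows because on $\mathcal{E}$ we have $|\phat_{\arm_1} - \mu_{\arm_1}| \le \varepsilon/4$ or so, and either $\phat_{\arm_1}$ is already below $\mu_{\arm^*} - \tfrac{5\varepsilon}{8}$, or it is not, in which case $\mu_{\arm_1} \ge \phat_{\arm_1} - \varepsilon/4 > \mu_{\arm^*} - \tfrac{5\varepsilon}{8} - \varepsilon/4 \ge \mu_{\arm^*} - \varepsilon$. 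For the inductive step, suppose $\arm^o(t)$ replaces $\arm^o(t-1)$: then the replacement condition gives $\phat_{\arm^o(t)} \ge \phat_{\arm^o(t-1)} + \alpha$ with $s_\ell > \tau_j$, so $|\phat_{\arm^o(t)} - \mu_{\arm^o(t)}| \le \varepsilon/8$ on $\mathcal{E}$ (this is the $\tau_j$-many-pulls guarantee). Now consider two cases on $\arm^o(t-1)$ via the inductive hypothesis. If $\mu_{\arm^o(t-1)} \ge \mu_{\arm^*} - \varepsilon$: I would need $\phat_{\arm^o(t-1)}$ close to $\mu_{\arm^o(t-1)}$, but $\arm^o(t-1)$'s estimate came from a stopping round where it had only $\max(s_{\ell'}, \tau_{j'})$ pulls — here I would invoke Property~\ref{ppt:arm-o}/the same $\mathcal{E}$-based guarantee to say $\phat_{\arm^o(t-1)} \ge \mu_{\arm^o(t-1)} - \varepsilon/8 \ge \mu_{\arm^*} - \varepsilon - \varepsilon/8$, hence $\phat_{\arm^o(t)} \ge \mu_{\arm^*} - \varepsilon - \varepsilon/8 + \alpha$; since $\alpha \ge \varepsilon/4$ this gives $\mu_{\arm^o(t)} \ge \phat_{\arm^o(t)} - \varepsilon/8 \ge \mu_{\arm^*} - \varepsilon$, closing the induction. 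If instead $\phat_{\arm^o(t-1)} < \mu_{\arm^*} - \tfrac{5\varepsilon}{8}$: then $\phat_{\arm^o(t)} \ge \phat_{\arm^o(t-1)} + \alpha$; if $\alpha = \varepsilon/2$ the most we get is $\phat_{\arm^o(t)} < \mu_{\arm^*} - \tfrac{5\varepsilon}{8} + \tfrac{\varepsilon}{2} = \mu_{\arm^*} - \tfrac{\varepsilon}{8}$, which alone is not enough — so I would further split: either $\phat_{\arm^o(t)} < \mu_{\arm^*} - \tfrac{5\varepsilon}{8}$ (first disjunct holds, done), or $\phat_{\arm^o(t)} \ge \mu_{\arm^*} - \tfrac{5\varepsilon}{8}$, and combined with $|\phat_{\arm^o(t)} - \mu_{\arm^o(t)}| \le \varepsilon/8$ this yields $\mu_{\arm^o(t)} \ge \mu_{\arm^*} - \tfrac{5\varepsilon}{8} - \tfrac{\varepsilon}{8} = \mu_{\arm^*} - \tfrac{3\varepsilon}{4}$, which is $\ge \mu_{\arm^*} - \varepsilon$, second disjunct holds.

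The main obstacle I anticipate is the bookkeeping around which estimate $\phat_{\arm^o(t-1)}$ we are allowed to use and with what accuracy: the arm $\arm^o(t-1)$ was itself the ``incumbent'' through possibly many comparisons, and its stored empirical mean was fixed at the moment it became incumbent (computed from $\ge \tau_{j}$ pulls for the then-current $j$), so I must be careful that the accuracy $\varepsilon/8$ claimed for it is genuinely covered by the event $\mathcal{E}$ — this is exactly the role of Property~\ref{ppt:arm-o} and is why the union bound in the first step must range over all arms and all counter values, not just the arms that eventually become incumbents. A secondary technical point is that the constants ($\varepsilon/4$ vs $\varepsilon/2$ for $\alpha$, the $16$ and $32$ in $s_\ell$ and $\tau_j$, and the target slack $5\varepsilon/8$) must be chosen consistently so that all the inequalities above actually go through with room to spare; I would verify these at the end rather than carry symbolic constants throughout. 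The randomized choice of $\alpha$ plays no role in the correctness argument here (it matters only for the sample-complexity bound), so for this lemma I can treat $\alpha$ as an arbitrary value in $\{\varepsilon/4, \varepsilon/2\}$ and the proof must work for both, which is what the case split on $\alpha$ above ensures.
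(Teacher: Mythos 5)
Your inductive step is essentially sound modulo constants, but the first step --- the claim that the uniform concentration event $\mathcal{E}$ holds with probability at least $1-\tfrac{3\delta}{4}$ --- does not go through, and this is precisely where the difficulty of the lemma lives. With $\tau_j=\tfrac{32}{\varepsilon^2}\log\tfrac{Cj^2}{\delta}$ pulls, the probability that a single arm violates $\varepsilon/8$-accuracy is $\Theta(\delta/(Cj^2))$. The counter $j$ resets to $1$ after every best-arm change, so summing over the realized trajectory gives a total failure probability of order $T\cdot\delta/C$, where $T$ is the number of best-arm changes; since $T$ can be as large as $\min(n,\,O(1/\varepsilon))$, this is not $O(\delta)$. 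Your proposed fix for the adaptivity issue --- a union bound over all (arm index, counter value, round) triples --- makes matters worse, giving $n\cdot O(\delta/C)$. No choice of constants rescues a uniform $O(\varepsilon)$-accuracy guarantee for all arms from sample sizes that depend only on $j$ and not on $n$ or on the epoch index; the paper explicitly notes that forcing uniform accuracy would push the sample complexity to $O(\tfrac{n}{\varepsilon^2}\log\tfrac{n}{\delta})$, which is what the whole construction is designed to avoid.

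The paper's proof circumvents this by demanding accuracy that degrades with the arm's suboptimality: an arm in the band $\cQ_d(T)$ (mean in $(\mu_{\arm^*}-d\varepsilon/8,\ \mu_{\arm^*}-(d-1)\varepsilon/8]$) need only be estimated to within $(d-8)\varepsilon/8$, which by Hoeffding fails with probability $O(\delta/(p^2C^{d-8}))$ --- geometrically small in $d$. This is paired with a counting argument: because $\phat_{\arm^o(t)}$ increases by at least $\varepsilon/4$ at each change, at most $d+3$ incumbents are ever ``in danger'' from band $d$, so summing the per-band failure probabilities over $d$ and over the relevant values of $t$ still yields at most $3\delta/4$. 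A coarse estimate for a far-away arm is still enough to conclude that, if it becomes the incumbent, its \emph{empirical} mean lands below $\mu_{\arm^*}-\tfrac{5\varepsilon}{8}$, which is exactly why the lemma is stated as a disjunction (small empirical mean, or true mean within $\varepsilon$ of optimal) rather than as a uniform accuracy guarantee. You would need to restructure your event $\mathcal{E}$ along these lines before the induction can be salvaged.
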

\revise{
\begin{proof}[Proof of Lemma~\ref{lem:main-1}] 
Let $Q_{t,p}$ be the $p$-th passed arm after $t$-th best arm change, and $s(p):=s_{\ell}$ such that $s_{\ell-1}< \tau_p\leq s_{\ell}$. For ease of analysis, we design a virtual sampling process for a better illustration. Notably, if $Q_{t,p}$ is pulled less than $\tau_p$ times when Algorithm~\ref{alg:main} ends, we pull $Q_{t,p}$ again to $s(p)$ times (a virtual process). Therefore, for all $p\geq 1$, $Q_{t,p}$ is pulled $s(p)$ times to obtain its estimated mean, denoted by $\phat^\prime_{Q_{t,p}}$. If ${\arm^o(t+1)}=Q_{t,p}$, then $\phat^\prime_{Q_{t,p}}= \phat_{Q_{t,p}}$ holds by definition. 
Let 
\begin{align*}
    & \cQ_d(T)=\bigg\{Q_{t,p}: \mu_{Q_{t,p}}\in \bigg(\mu_{\arm^*}-\frac{d\varepsilon}{8}, \notag \\
    & \quad \quad \mu_{\arm^*}-\frac{(d-1)\varepsilon}{8}\bigg],  p\geq 1, \text{ and } t\in [T] \bigg\},
\end{align*}
where $d$ is an integer and $d\geq 1$.  Fixed a $d\geq 9$, we consider the following cases. 
\\
\textbf{Case 1:} 
Let $G_{d}$ be the event that there exists $t$ and $p$ such that $\hat{\mu}_{\arm^o(t)}\leq \mu_{\arm^*}-\frac{d\eps}{4} -\frac{\eps}{2}$, $Q_{t,p}\in \cQ_{d}(T)$.
 Let $E_{d1}$ be the event that 1: $G_{d}$ occurs, and 2: for the first $t$ and $p$ such that $\hat{\mu}_{\arm^o(t)}\leq \mu_{\arm^*}-\frac{d\eps}{4} -\frac{\eps}{2}$ and $Q_{t,p}\in \cQ_{d}(T)$, it holds that $\arm^o(t+1)=Q_{t,p}$ and $\hat{\mu}_{\arm^{o}(t+1)}\in [\mu_{\arm^o(t+1)}-(d-8)\eps/8, \mu_{\arm^o(t+1)}+(d-8)\eps/8]$.  We note that according to the update rule of the algorithm, if for all $p\geq 1$, $| \phat'_{Q_{t,p}}-{\mu}_{Q_{t,p}} |\leq  \frac{(d-8)\varepsilon}{8}$, then $Q_{t,p}$ will be $\arm^o(t+1)$. Define $F_{t,p}$ be the union of history till $Q_{t,p}$ comes. 
Therefore,
\begin{align}
\label{eq:Ed1}
 & \min_{F_{t,p}: G_{d}  \text{ is true under } F_{t,p}}  \PP\bigg\{E_{d1} \ \bigg | \ F_{t,p} \bigg \}  \notag \\
  & \geq 1-\sum_{p\geq 1} \Pr\bigg(| \phat'_{Q_{t,p}}-{\mu}_{Q_{t,p}} |\leq  \frac{(d-8)\varepsilon}{8}  \bigg) \notag \\
  & \geq 1- \frac{2\delta}{p^2 C^{d-8}} \notag \\
  & \geq 1- \frac{4\delta}{C^{d-8}}.
\end{align}
\textbf{Case 2:} 
Let $G_{d}(t)$ be the event that $\hat{\mu}_{\arm^o(t)}> \mu_{\arm^*}-\frac{d\eps}{4} -\frac{\eps}{2}$.
Let $E_{d2}(t)$ be the event that 1: $G_{d}(t)$ is true, 2: for any $p\geq 1$ and $Q_{t,p}\in\cQ_{d}(T)$, $| \phat'_{Q_{t,p}}-{\mu}_{Q_{t,p}} |< \frac{(d-8)\varepsilon}{8}$. We have
\begin{align}
\small
\label{eq:Ed2}
& \qquad \PP\bigg\{ E_{d2}(t) \ \bigg | \ G_{d}(t)\bigg\} \notag \\
    & \geq 1-\PP\bigg\{ \exists p
\geq 1: Q_{t,p} \in \cQ_{d}(T), | \phat'_{Q_{t,p}}-{\mu}_{Q_{t,p}} |\geq  \frac{(d-8)\varepsilon}{8}
 \bigg \} \notag \\
  & \geq 1- \sum_{p\geq 1}\Pr\bigg(| \phat'_{Q_{t,p}}-{\mu}_{Q_{t,p}} |\geq  \frac{(d-8)\varepsilon}{8}  \bigg) \notag \\
  & \geq 1-\sum_{p\geq 1}\frac{2\delta}{p^2 C^{d-8}} \notag \\
  & \geq 1- \frac{4\delta}{C^{d-8}}.
\end{align}
When $G_{d}(t)$ and $E_{d2}(t)$ are true and $\arm^o(t+1)\in \cQ_{d}(T)$, we have
\begin{align*}
    | \phat_{\arm^o(t+1)}-{\mu}_{\arm^o(t+1)} |< \frac{(d-8)\varepsilon}{8}.
\end{align*}
Fixed a $d<9$, we consider the following cases. 
\\
\textbf{Case 3:} 
Let $E_{d3}$ be the event that 1: $G_{d}$ occurs, and 2: for the first $t$ and $p$ such that $\hat{\mu}_{\arm^o(t)}\leq \mu_{\arm^*}-\frac{d\eps}{4} -\frac{\eps}{2}$ and $Q_{t,p}\in \cQ_{d}(T)$, $\arm^o(t+1)=Q_{t,p}$ and $\hat{\mu}_{\arm^{o}(t+1)}\in [\mu_{\arm^o(t+1)}-d\eps/8, \mu_{\arm^o(t+1)}+d\eps/8]$. 
 We have
\begin{align}
\label{eq:Ed3}
  \min_{F_{t,p}: G_{d}  \text{ is true under } F_{t,p}}   \PP\bigg\{ E_{d3} \ \bigg | \ G_{d} \ \bigg \} & \geq  1-\sum_{p\geq 1}\Pr\bigg(| \phat'_{Q_{t,p}}-{\mu}_{Q_{t,p}} |\leq  \frac{d\varepsilon}{8} \bigg) \notag \\
  & \geq 1- \sum_{p\geq 1}\frac{2\delta}{p^2 C^{d}} \notag \\
  & \geq 1- \frac{4\delta}{C^{d}}.
\end{align}
\textbf{Case 4:}  
Let $E_{d4}(t)$ be the event that 1: $G_{d}(t)$ is true, 2: for any $p\geq 1$ and $Q_{t,p}\in\cQ_{d}(T)$, $| \phat'_{Q_{t,p}}-{\mu}_{Q_{t,p}} |< \frac{d\varepsilon}{8}$. Then, we have
\begin{align}
\label{eq:Ed4}
 \PP\bigg\{ E_{d4}(t) \ \bigg | \ G_{d}(t)\bigg\}
     \geq &1- \PP\bigg\{ \exists p
\geq 1: Q_{t,p} \in \cQ_{d}(T), | \phat'_{Q_{t,p}}-{\mu}_{Q_{t,p}} |\geq  \frac{d\varepsilon}{8}, {\arm^o(t+1)}=Q_{t,p} 
 \bigg \} \notag \\
  & \geq 1- \sum_{p\geq 1}\Pr\bigg(| \phat'_{Q_{t,p}}-{\mu}_{Q_{t,p}} |\geq  \frac{d\varepsilon}{8}  \bigg) \notag \\
  & \geq 1-\sum_{p\geq 1}\frac{2\delta}{p^2 C^{d}} \notag \\
  & \geq 1- \frac{4\delta}{C^{d}}.
\end{align}
When $G_{d}(t)$ and $E_{d4}(t)$ are true and $\arm^o(t+1)\in \cQ_{d}(T)$, then 
\begin{align*}
    | \phat_{\arm^o(t+1)}-{\mu}_{\arm^o(t+1)} |< \frac{d\varepsilon}{8}.
\end{align*}
Define $I$ as a random variable such that
$I=1$ if the following conditions hold.
\begin{enumerate}
    \item For all $d\geq 9$, if $G_{d}$ is true,  so does $E_{d1}$.
    \item For all $d\geq 9$ and $t\in [T]$, if $G_{d}(t)$ is true, so does $E_{d2}(t)$.
    \item For all $d\in [1,8]$, if $G_{d}$ is true, so does $E_{d3}$.
    \item For all $d\in [1,8]$ and $t\in [T]$, if $G_{d}(t)$ is true, so does $E_{d4}(t)$.
\end{enumerate}
Otherwise, $I=0$.  We use $E^c$ for the opposite event of $E$. Similar to the chain rule, we sequentially apply the \eqref{eq:Ed1},\eqref{eq:Ed2},\eqref{eq:Ed3}, and \eqref{eq:Ed4} to compute the probability of $I=1$. 
We have 
\begin{align*}
    \PP(I=1)& \geq  \EE\Bigg[ \prod_{d\geq 9}\bigg( \ind\{G_{d}^c\}+ \ind\{G_{d}\}\PP(E_{d1}\mid G_{d}) \bigg) \prod_{d\geq 9}  \prod_{t\in [T]} \bigg(  \ind\{G_{d}^c(t)\}+\ind\{G_{d}(t)\}\PP(E_{d2}\mid G_{d}(t)) \bigg)\notag \\
    & \quad \times \prod_{d< 9}\bigg( \ind\{G_{d}^c\}+ \ind\{G_{d}\} \PP(E_{d3}\mid G_{d})\bigg) \prod_{d< 9}  \prod_{t\in [T]} \bigg(  \ind\{G_{d}^c(t)\}+\ind\{G_{d}(t)\}\PP(E_{d4}\mid G_{d}(t)) \bigg)  \Bigg]  \notag \\
   &  \geq \prod_{d\geq 9}  \min_{F_{t,p}: G_{d}  \text{ is true under } F_{t,p}}\PP(E_{d1}\mid F_{t,p}) \cdot  \EE\Bigg[\prod_{d\geq 9} \prod_{t\in [T]} \bigg(  \ind\{G_{d}^c(t)\}+\ind\{G_{d}(t)\}\PP(E_{d2}\mid G_{d}(t)) \bigg)\Bigg] \notag \\
    & \quad \times \prod_{d< 9}  \min_{F_{t,p}: G_{d}  \text{ is true under } F_{t,p}} \PP(E_{d3}\mid F_{t,p}) \cdot \EE \Bigg[\prod_{d< 9} \prod_{t\in [T]} \bigg(  \ind\{G_{d}^c(t)\}+\ind\{G_{d}(t)\}\PP(E_{d4}\mid G_{d}(t)) \bigg)\Bigg]
\end{align*}
Note that $\PP(E_{d2}\mid G_{d}(t))$ can be bounded by \eqref{eq:Ed2} and $\PP(E_{d4}\mid G_{d}(t))$ can be bounded by \eqref{eq:Ed4}. Assume
$I=1$. We now determine the number of occurrences of $G_{d}(t)$. For $I=1$,
 we have $\hat{\mu}_{\arm^{o}(T)}\leq U:= \mu_{\arm^*}+\epsilon/8$. Given that $\alpha\geq \frac{\varepsilon}{4}$, the update rule in Algorithm~\ref{alg:main} indicates $\phat_{\arm^o(t)}\geq \phat_{\arm^o(t-1)}+\frac{\varepsilon}{4}$.  This means that the number of $\arm^o(t)$ with $\hat{\mu}_{\arm^o(t)}>L:=\mu_{\arm^*}-\frac{d\eps}{4}-\frac{\eps}{2}$ is at most 
\begin{align}
  \frac{4(U-L)}{\eps}\leq d+3.
\end{align}
Therefore,
\begin{align}
    \Pr(I=1) & \geq \prod_{d=1}^{8} \bigg(1-\frac{4\delta}{C^{d}} \bigg)\prod_{d=1}^{8} \bigg(1-\frac{4(d+3)\delta}{C^{d}} \bigg)\prod_{d=9}^{\infty} \bigg(1-\frac{4\delta}{C^{d-8}}\bigg)\prod_{d=9}^{\infty} \bigg(1-\frac{4(d+3)\delta}{C^{d-8}} \bigg) \notag \\
    &\geq 1-\sum_{d=1}^{8}\frac{4(d+4)\delta}{C^{d}}-\sum_{d=9}^{\infty} \frac{4(d+4)\delta}{C^{d-8}} \notag\\
    &\geq 1-\frac{3\delta}{4}.
    \end{align}
where the last inequality is due to $C\geq 100$. 
Let $r_t \in \NN$ and satisfies
\begin{align}
\label{eq:clr}
 \mu_{\arm^o(t)}\in \bigg(\mu_{\arm^*}-\frac{r_t\eps}{8}, \mu_{\arm^*}-\frac{(r_t-1)\eps}{8} \bigg].
\end{align}
Based on the definition of $I$ and $\cQ_d(T)$,  we have 
\begin{align}
    \{I=1\} & \subseteq  \bigg\{\bigcap_{t\geq1} \bigg\{ \bigg\{|\phat_{\arm^o(t)}-\mu_{\arm^o(t)}|\leq\frac{(r_t-8)\varepsilon}{8}\bigg\}\notag \\
    &\qquad \qquad \bigcup   \bigg\{\mu_{\arm^o(t)}\geq \mu_{\arm^*}-\varepsilon\bigg\} \bigg\}\bigg\} \notag \\
     & \subseteq   \bigg\{\bigcap_{t\geq1} \bigg\{\bigg\{\phat_{\arm^o(t)}< \mu_{\arm^*}-\frac{5\varepsilon}{8}\bigg\} \notag \\
    & \qquad \qquad \bigcup\bigg\{\mu_{\arm^o(t)}\geq \mu_{\arm^*}-\varepsilon\bigg\}\bigg\} \bigg\},
\end{align}
where the second formula follows since if $|\phat_{\arm^o(t)}-\mu_{\arm^o(t)}|\leq\frac{(r_t-8)\varepsilon}{8}$, we have
\begin{align}
    &\phat_{\arm^o(t)}\leq \mu_{\arm^o(t)}+\frac{(r_t-8)\varepsilon}{8} \notag \\
    &\leq \mu_{\arm^*}-\frac{(r_t-1)\varepsilon}{8}+ \frac{(r_t-8)\varepsilon}{8} < \mu_{\arm^*}-\frac{5\varepsilon}{8}.
\end{align}
This completes the proof.
    \end{proof}
}

Based on Lemma~\ref{lem:main-1}, we are ready to accomplish the correctness of Algorithm~\ref{alg:main}.

\begin{proof}[Proof of Correctness of Algorithm~\ref{alg:main}]
 Since $\alpha\leq \frac{\varepsilon}{2}$,
 from Algorithm~\ref{alg:main},  there exists an $\arm^o(t)$ such that
\begin{align}
\label{eq:main-correct-1}
  \phat_{\arm^*}-\frac{\varepsilon}{2}\leq \phat_{\arm^o(t)}.
\end{align}
From Proposition~\ref{prop:chernoff}, we know
\begin{align}
\label{eq:the1-bound-muo}
\Pr\bigg(\phat_{\arm^*} \geq \mu_{\arm^*} -\frac{\varepsilon}{8} \bigg)&\geq 1-2\exp \bigg(-\frac{s_1\varepsilon^2}{8} \bigg)\notag \\
&\geq 1- \frac{\delta}{4}.
\end{align}
Since $\alpha\geq 0$, from Algorithm~\ref{alg:main}, we have 
\begin{align}
\label{eq:main-correct-22}
    \phat_{\arm^o(t)}\leq\phat_{\arm^o(T)}.
\end{align} 
Combining~\eqref{eq:main-correct-1}~\eqref{eq:the1-bound-muo}~\eqref{eq:main-correct-22}  together, we have
\begin{align}
\label{eq:proof-correct}
    \Pr\bigg(\phat_{\arm^o(T)}\geq \mu_{\arm^*}-\frac{5\varepsilon}{8}\bigg)\geq 1-\frac{\delta}{4}.
\end{align}
From Lemma~\ref{lem:main-1}, we obtain
\begin{align}
    &\Pr\bigg(\bigg\{\phat_{\arm^o(T)}< \mu_{\arm^*}-\frac{5\varepsilon}{8}\bigg\} \notag \\
    &\qquad \quad  \bigcup\bigg\{\mu_{\arm^o(T)}\geq \mu_{\arm^*}-\varepsilon\bigg\}\bigg) \geq 1-\frac{3\delta}{4}.
\end{align}
Let 
\begin{align*}
    &A=\bigg\{\phat_{\arm^o(T)}< \mu_{\arm^*}-\frac{5\varepsilon}{8}\bigg\},\\
    \text{and}\quad& B=\bigg\{\mu_{\arm^o(T)}\geq \mu_{\arm^*}-\varepsilon\bigg\}.
\end{align*}
Then from~\eqref{eq:proof-correct},  $\Pr(A)\leq \frac{\delta}{4}$. Therefore $\Pr(B)\geq \Pr(A \cup B)-\Pr(A)\geq 1-\delta$, which completes the proof.
%
%
%
%
%
\end{proof}

Due to the space constraint, we provide a sketch of proof for the optimal sample complexity, and we refer interested readers to Appendix~\ref{appendix-1} for details. 

\begin{proof}[Proof of Sample Complexity of Algorithm~\ref{alg:main} (Sketch)] 
    The key idea is to bound the total expected number of pulls during the life cycle of each selected $\arm^o$, i.e., the period from $\arm^o$ replacing its predecessor to $\arm^o$ being replaced by its successor. Given current $\arm^o$, we divide the arriving arms during the life cycle of $\arm^o$ into two sets, i.e.,
    \begin{align*}
        &S_1:=\big\{\arm_i\colon \mu_{\arm_i} \leq \phat_{\arm^o}+\frac{3\varepsilon}{8}\big\},\\
        \text{and}\quad&S_2:=\big\{\arm_i\colon \mu_{\arm_i} > \phat_{\arm^o}+\frac{3\varepsilon}{8}\big\}.
    \end{align*}
    We show that (i) for each $\arm_i\in S_1$, the expected number of pulls of $\arm_i$ is $O\big(\frac{1}{\varepsilon^2}\log \frac{1}{\delta}\big)$, and (ii) the total expected number of pulls of all arms in $S_2$ is $O(\tau_j|S_2|)$ where $|S_2|$ is $O(\operatorname{polylog}(j))$ with high probability. Consequently, the total expected number of pulls for $j$ arriving arms during the life cycle of $\arm^o$ is $O\big(\frac{j}{\eps^2}\log\frac{1}{\delta} \big)$. In the following, we provide some intuitive analyses and the formal analysis is far more challenging and interested readers are referred to the appendix for more details.

    \spara{Case I} Consider $\arm_i\in S_1$ with $\mu_{\arm_i}\leq \phat_{\arm^o}+\frac{3\varepsilon}{8}$. By Hoeffding inequality, after $\Theta(\frac{1}{\eps^2}\log \frac{1}{\delta})$ pulls of $\arm_i$, $\hat{\mu}_{\arm_i}\leq {\mu}_{\arm_i}+\frac{\eps}{8}$ holds with high probability. Thus, $\hat{\mu}_{\arm_i}\leq \phat_{\arm^o}+\frac{\eps}{2}$. If $\alpha=\frac{\eps}{2}$, $\arm_i$ will be dropped. On the other hand, if $\alpha=\frac{\eps}{4}$, $\arm_i$ will be pulled at most $2\tau_j$ times. As a result, the expected number of pulls of $\arm_i$ is $O\big(\frac{1}{\eps^2}\log \frac{1}{\delta}\cdot (1-\frac{1}{\log j+1})+\frac{2\tau_j}{\log j+1}\big)=O\big(\frac{1}{\eps^2}\log \frac{1}{\delta}\big)$.
    
    
    \spara{Case II} Consider $\arm_i\in S_2$ with $\mu_{\arm_i}> \phat_{\arm^o}+\frac{3\varepsilon}{8}$. Again, by Hoeffding inequality, after $\Theta(\frac{1}{\eps^2}\log \frac{1}{\delta})$ pulls of $\arm_i$, $\hat{\mu}_{\arm_i}\geq {\mu}_{\arm_i}-\frac{\eps}{8}$ holds with high probability. Thus, $\hat{\mu}_{\arm_i}>\phat_{\arm^o}+\frac{\varepsilon}{4}$. If $\alpha=\frac{\eps}{4}$, $\arm_i$ will replace $\arm^o$ and the life of $\arm^o$ ends. When $|S_2|=\Theta(\operatorname{polylog}(j))$, $\alpha=\frac{\eps}{4}$ will happen at least once with high probability.
    
    Putting it together, we have the total expected number of pulls for all the $j$ arms $O\big(j\cdot \frac{1}{\eps^2}\log \frac{1}{\delta}+\operatorname{polylog}(j)\cdot \tau_j\big)=O\big(\frac{j}{\eps^2}\log\frac{1}{\delta}\big)$, since $j=\Omega(\operatorname{polylog}(j))$.
    %
\end{proof}

\section{Single-Arm-Memory Algorithm for \KAI} \label{sec:eps-KAI}

In this section, we extend $\eps$-BAI into its general version, \ie \eps-KAI that aims to find the $\varepsilon$-\topk arms using a {\it single-arm memory}. That is, we aim to find $k$ arms such that each of which has the mean no smaller than $\mu_{\arm^\ast(k)}-\varepsilon$, where $\arm^\ast(k)$ is the $k$-th largest value in $\{\mu_{\arm_1},\ldots,\mu_{\arm_n}\}$.  


\subsection{High Level Overview}\label{sec:overviewk}

\eat{We say that $k$ arms are \eps-\topk arms if each of them has mean no smaller than $\mu_{\arm^*(k)}-\varepsilon$, where $\mu_{\arm^*(k)}$ is $k$-th largest value among $\{\mu_{\arm_1},\ldots,\mu_{\arm_n}\}$ for $n$ arms in the stream.} 

First, we maintain the first $k$ arms in a set\footnote{We store the IDs of these $k$ arms and ignore the memory cost.}, denoted as $\cA$. Let $\tok^o$ be the arm in $\cA$ with the minimum estimated mean. When the following $\arm_i$ arrives in the stream, we compare $\phat_{\arm_i}$  with $\phat_{\tok^o}$ to update $\tok^o$. Similarly, we perform the following two operations.
\begin{enumerate}[topsep=0mm,itemsep=0mm,leftmargin=*]
    \item \textbf{Sampling.}  We pull $\arm_i$ $\Theta(\frac{1}{\eps^2}\log\frac{k}{\delta})$ times to get $\phat_{\arm_i}$.
    \item \textbf{Comparison.} We replace $\tok^o$ with $\arm_i$ if $\hat{\mu}_{\arm_i}\geq \hat{\mu}_{\tok^o}+\alpha$ holds, where $\alpha = \Theta(\eps)$ follows the same setting in Algorithm~\ref{alg:main}.
\end{enumerate}
 In addition, our algorithm retains the following property.

\begin{property}\label{ppt:topk-arm}
    For each $\arm$ in $\cA$, $|\phat_{\arm}-\mu_{\arm}| \le O(\eps)$.
\end{property}



Let $\tok^o(T)$ be the arm with the minimum estimated mean in the final returned set $\cA$ . Following the similar logic flow in section~\ref{sec:overview}, the two operations and Property~\ref{ppt:topk-arm} guarantee that $|\mu_{\tok^o(T)}-\mu_{\arm^*(k)}|\leq O(\eps)$ with high probability. The main idea is as follows. By applying union bound for the $k$ arms $\{\arm^\ast(1),\ldots, \arm^\ast(k)\}$, operation one ensures that for all $s\in[k]$, $| \hat{\mu}_{\arm^*(s)}-\mu_{\arm^*(s)}|\le O(\eps)$. As operation two indicates, $\phat_{\tok^o(T)}$ is the $k$-th largest estimated mean among those of arms in $\cA$, which means there are at most $k-1$ values in $\{\phat_{\arm^*(1)},\cdots, \phat_{\arm^*(k)}\}$ larger than $\phat_{\tok^o(T)}$ by $\Theta(\eps)$. Therefore, based on operation one and operation two, we have $\phat_{\tok^o(T)}-\phat_{\arm^*(k)} \le O(\eps)$. Meanwhile, $|\hat{\mu}_{\tok^o(T)}-\mu_{\tok^o(T)}| \le O(\eps)$ holds according to Property~\ref{ppt:topk-arm}. In consequence, we acquire $|{\mu}_{\tok^o(T)}-{\mu}_{\arm^*(k)}|\leq O(\eps)$ and the $k$ arms in $\cA$ are $\eps$-$\tok$-$k$ arms with high probability.

How to implement the two operations and Property~\ref{ppt:topk-arm} within optimal number of pulls remains the main challenge in $\eps$-KAI. However, techniques adopted in \eps-BAI could basically tackle this issue. Hence we omit the details here. 


\subsection{The Algorithm}\label{sec:algorithmk}

First we define the following two parameters of our algorithm.
\begin{align*}
 &\{s_\ell\}_{\ell=1}^{\infty}: \quad s_\ell = \frac{16}{\varepsilon^2} \cdot \log\bigg(\frac{C\cdot k}{\delta}\bigg)\cdot 2^{\ell}, \text{ and } s_0=0; \\
	&\{\tau_j\}_{j=1}^{\infty}: \quad \tau_j :=  \frac{32}{\varepsilon^2} \cdot \log\bigg(\frac{C\cdot k \cdot j^2}{\delta}\bigg), 
\end{align*}
where $C\geq 100$ is a universal constant. \eat{Notice that the number of returned arms $k$ is considered in definitions.}

Algorithm~\ref{alg:topk} presents the pseudo-code for \eps-KAI. As noticed, Algorithm~\ref{alg:topk} is tailored based on Algorithm~\ref{alg:main} to identify the $\eps$-$\tok$-$k$ arms. Specifically, to initialize $\cA$, we pull the first $k$ arrived arms $s_1$ times, and store them in $\cA$. $\tok^o$ denotes the arm with the minimum estimated mean in $\cA$. Then, we compare each arriving $\arm_i$ with $\tok^o$ through multiple rounds. This part is conducted similarly as the procedure of Algorithm~\ref{alg:main} in section~\ref{sec:algorithm}. Eventually, the indexes of $\eps$-$\tok$-$k$ arms in $\cA$ are returned.  

Our main result is formalized in the following theorem.
\begin{theorem}\label{thm:top-k}
Given a stream of $n$ arms, approximation parameter $\varepsilon$ and confidence parameter $\delta$ in $(0, 1)$, Algorithm~\ref{alg:topk} finds $\varepsilon$-top-k arms with probability at least $1-\delta$ using expected $O(\frac{n}{\varepsilon^2} \cdot \log{(\frac{k}{\delta})})$  pulls and a single-arm memory.
\end{theorem}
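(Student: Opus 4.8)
\textbf{Proof proposal for Theorem~\ref{thm:top-k}.}

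The plan is to mirror the two-part structure used for Theorem~\ref{thm:main} (correctness via a union-bound over the life cycles of $\tok^o$, then sample complexity via a per-life-cycle accounting), adapting every probability budget to absorb the extra factor of $k$ that appears because we now track $k$ reference arms $\{\arm^*(1),\dots,\arm^*(k)\}$ instead of a single $\arm^*$. First I would set up the analogue of Lemma~\ref{lem:main-1}: define \emph{top-$k$ change} as the event that $\tok^o$ (the current minimum-estimated-mean arm in $\cA$) gets replaced, let $\tok^o(t)$ be the arm after the $t$-th such change, and $\tok^o(T)$ the final one. Because $s_\ell$ and $\tau_j$ in Section~\ref{sec:algorithmk} carry an extra $\log k$ inside the logarithm, each arm pulled $\tau_j$ times now satisfies $|\phat-\mu|\le O(\eps)$ with probability at least $1-\tfrac{\delta}{Ckj^2}$, so the same virtual-sampling device (pull each passed arm $Q_{t,p}$ up to $s(p)$ times, define $\phat'_{Q_{t,p}}$) and the same case analysis over the bands $\cQ_d(T)$ go through verbatim, except every bound $\tfrac{2\delta}{p^2 C^{d}}$ becomes $\tfrac{2\delta}{kp^2 C^{d}}$. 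Summing over $d$, over $t\in[T]$, and over the $\le d+3$ live values of $\tok^o$, the failure probability is still $\le \tfrac{3\delta}{4}$ (indeed with room to spare), giving the top-$k$ analogue: with probability $\ge 1-\tfrac{3\delta}{4}$, for every $t$ either $\phat_{\tok^o(t)}<\mu_{\arm^*(k)}-\tfrac{5\eps}{8}$ or $\mu_{\tok^o(t)}\ge\mu_{\arm^*(k)}-\eps$.

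Second, for correctness I would upgrade the final step to handle $k$ reference arms. The key structural fact is that $\phat_{\tok^o(T)}$ is the $k$-th largest estimated mean in $\cA$, and that each $\arm^*(s)$, $s\in[k]$, if it is stored in $\cA$, has $\phat_{\arm^*(s)}\le \phat_{\arm^*(s)} $ trivially, while if it was passed over without ever entering or while leaving $\cA$, the update rule (with $\alpha\le\tfrac{\eps}{2}$) forced some $\tok^o(t)$ with $\phat_{\arm^*(s)}-\tfrac{\eps}{2}\le\phat_{\tok^o(t)}$ — and $\phat_{\tok^o(t)}\le\phat_{\tok^o(T)}$ since $\alpha\ge0$ makes estimated means of successive $\tok^o$'s non-decreasing. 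Since at most $k-1$ of the $\phat_{\arm^*(s)}$ can strictly exceed $\phat_{\tok^o(T)}+\tfrac{\eps}{2}$ (there are only $k-1$ slots in $\cA$ above $\tok^o(T)$), there is at least one index $s^\star\in[k]$ with $\phat_{\arm^*(s^\star)}\le\phat_{\tok^o(T)}+\tfrac{\eps}{2}$; combining with a Hoeffding bound $\Pr(\phat_{\arm^*(s^\star)}\ge\mu_{\arm^*(s^\star)}-\tfrac{\eps}{8})\ge 1-\tfrac{\delta}{4k}$ (union-bounded over $s\in[k]$, costing $\tfrac{\delta}{4}$ total) and $\mu_{\arm^*(s^\star)}\ge\mu_{\arm^*(k)}$, I get $\phat_{\tok^o(T)}\ge\mu_{\arm^*(k)}-\tfrac{5\eps}{8}$ with probability $\ge 1-\tfrac{\delta}{4}$. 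Intersecting with the Lemma-analogue event, the band $\{\phat_{\tok^o(T)}<\mu_{\arm^*(k)}-\tfrac{5\eps}{8}\}$ is excluded, so $\mu_{\tok^o(T)}\ge\mu_{\arm^*(k)}-\eps$, and hence all $k$ arms of $\cA$ are $\eps$-top-$k$, with probability $\ge 1-\delta$.

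Third, for the sample complexity I would reuse the life-cycle decomposition of the sketch proof of Theorem~\ref{thm:main} essentially unchanged: fix the current $\tok^o$, split the arms arriving during its life cycle into $S_1=\{\arm_i:\mu_{\arm_i}\le\phat_{\tok^o}+\tfrac{3\eps}{8}\}$ and $S_2=\{\arm_i:\mu_{\arm_i}>\phat_{\tok^o}+\tfrac{3\eps}{8}\}$. For $\arm_i\in S_1$, after $\Theta(\tfrac{1}{\eps^2}\log\tfrac{k}{\delta})$ pulls $\phat_{\arm_i}\le\phat_{\tok^o}+\tfrac{\eps}{2}$ w.h.p., so it is dropped when $\alpha=\tfrac{\eps}{2}$ and pulled $\le 2\tau_j$ times when $\alpha=\tfrac{\eps}{4}$, giving expected pulls $O(\tfrac{1}{\eps^2}\log\tfrac{k}{\delta}\cdot(1-\tfrac1{\log j+1})+\tfrac{2\tau_j}{\log j+1})=O(\tfrac{1}{\eps^2}\log\tfrac{k}{\delta})$ since $\tau_j=O(\tfrac1{\eps^2}\log\tfrac{kj^2}{\delta})$; for $S_2$, the randomized $\alpha$ ensures $|S_2|=O(\operatorname{polylog}(j))$ w.h.p. so the total contribution is $O(\operatorname{polylog}(j)\cdot\tau_j)=O(\tfrac{j}{\eps^2}\log\tfrac{k}{\delta})$ since $j=\Omega(\operatorname{polylog}(j))$. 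Summing over all life cycles and using that the total number of arms is $n$ gives $O(\tfrac{n}{\eps^2}\log\tfrac{k}{\delta})$ expected pulls, plus the $O(k\cdot s_1)=O(\tfrac{k}{\eps^2}\log\tfrac{k}{\delta})=O(\tfrac{n}{\eps^2}\log\tfrac{k}{\delta})$ pulls to initialize $\cA$. The main obstacle, as in Theorem~\ref{thm:main}, is the delicate probabilistic bookkeeping in the Lemma-analogue — specifically, re-verifying that, with the $\log k$ inside the logarithms, the chained product of per-case success probabilities over all bands $d$, all change-times $t$, and all passed arms $p$ still telescopes to $1-\tfrac{3\delta}{4}$; the rest is a routine but careful transcription of the $k=1$ argument with $\arm^*$ replaced by $\arm^*(k)$ and the "$k-1$ slots above $\tok^o$" counting argument inserted at the correctness step.
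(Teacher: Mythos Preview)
Your overall plan is right and matches the paper's: a top-$k$ analogue of Lemma~\ref{lem:main-1}, then correctness, then a per-life-cycle sample-complexity argument. The correctness and sample-complexity parts are essentially the paper's argument (the paper handles correctness with a slightly cleaner dichotomy---either all $\arm^*(s)$ lie in $\cA_T$, or some $\arm^*(s)$ was rejected, in which case $\phat_{\arm^*(s)}\le\phat_{\tok^o(t)}+\tfrac{\eps}{2}$ for some $t$---but your pigeonhole phrasing reaches the same conclusion).

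There is, however, a genuine gap in your Lemma-analogue step. You assert that the number of ``live'' values of $\tok^o$ in a band is still at most $d+3$, as in the $k=1$ case, and that the extra $1/k$ in the per-arm bound therefore gives ``room to spare''. This is wrong. In the $k=1$ proof the count $d+3$ comes from the strict increment $\phat_{\arm^o(t)}\ge\phat_{\arm^o(t-1)}+\tfrac{\eps}{4}$. In the top-$k$ setting that increment fails: when a new arm is inserted, the new $\tok^o$ is the minimum of the updated $\cA$, which can be any of the surviving $k-1$ arms, so $\phat_{\tok^o(t+1)}$ is only guaranteed to be $\ge\phat_{\tok^o(t)}$ (the non-decrease you do state), not $\ge\phat_{\tok^o(t)}+\tfrac{\eps}{4}$. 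The correct structural fact---which you must prove and use---is that after $k$ consecutive changes every arm currently in $\cA$ was inserted with $\phat\ge\phat_{\tok^o(t)}+\tfrac{\eps}{4}$, hence
\[
\phat_{\tok^o(t+k)}\ \ge\ \phat_{\tok^o(t)}+\tfrac{\eps}{4}.
\]
Consequently the number of $t$'s with $\phat_{\tok^o(t)}>L$ is at most $(d+3)k$, not $d+3$. That extra factor of $k$ is exactly what the $1/k$ in your per-arm bound (coming from the $\log(Ck/\delta)$ in $s_\ell,\tau_j$) is there to cancel; there is no slack. Without this step your union bound over $t$ undercounts by a factor of $k$ and the Lemma-analogue does not go through.
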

Theorem~\ref{thm:top-k} summarizes the main results of Algorithm~\ref{alg:topk}. Detailed proofs are in Appendix \ref{appendix-2}. 

Compared with \citet{assadi2020exploration}, our algorithm~\ref{alg:topk} is fundamentally different.  \citet{assadi2020exploration} require the assumption, e.g., $\Delta_{k+1}<\varepsilon$ and does not test each arm in FIFO order, which leads to $O(k)$ memory costs. In comparison, our algorithm does not make any explicit assumption and uses a single-arm memory.

{\bf Remark:} This paper adopts the wildly used Explore-$k$ metric \cite{kalyanakrishnan2010efficient} which asks for $k$ arms whose reward means are lower than that of the $k$-th best arm by at most $\eps$.  By using the similar technique  \cite{cao2015top,tianyuan2019efficient}, our algorithm can return the top-$k$ arms such that the mean of $i$-th returned arm is at most $\varepsilon$  lower than that of the $i$-th best arm.


\begin{algorithm}[!t]
\label{alg:topk}
\caption{Streaming \KAI}
\KwIn {$k$, \eps, $\delta$, and a stream of $n$ arms.}
\KwOut {The indexes of $k$ arms.}
initialize:  $j\leftarrow 1$, $i\leftarrow 1$, $\cA=\emptyset$\; 
\For{each arriving $\arm_i$ ($i \leq k$)}
{ 
 pull $\arm_i$ $s_1$ times to obtain $\phat_{\arm_i}$\; 
 insert $\arm_i$ to $\cA$\; 
 }
 $\tok^o \leftarrow \arg\min_{\arm \in \cA} \phat_{\arm}$\;

\For{each arriving $\arm_i$ ($i > k$)}
{
    set $\alpha$ as $\frac{\varepsilon}{4}$ with probability $\frac{1}{\log j + 1}$, and as $\frac{\varepsilon}{2}$ with other probability $1 - \frac{1}{\log j + 1}$\;
    $\ell \leftarrow 1$\; 
    \While{true}
    { 
        pull $\arm_i$ for $s_{\ell}-s_{\ell-1}$ times\;
        \If{$\phat_{\arm_i} \geq \phat_{\tok^o} + \alpha$ and $s_{\ell}>\tau_j$}  
        { 
            $\tok^o \leftarrow \arm_i$\;
            insert $\arm_i$ into $\cA$ and update $\tok^o$\; 
            $j\leftarrow 1$\;
            \textbf{break}\;
        }
        \ElseIf{$\phat_{\arm_i} < \phat_{\tok^o} + \alpha$}
        { 
            $j\leftarrow j + 1$\; 
            \textbf{break}\;
        }
        \Else
        { 
            $\ell\leftarrow \ell+1$;
        }
    }
}
\Return the indexes of the arms in $\cA$\;
\end{algorithm}

\section{Streaming BAI}
Previous sections study the problems with {\it instance independent} sample complexity. However, for particular problem instances, the sample complexity could be highly optimized. In this section, we consider the streaming BAI problem and {investigate} the {\it instance dependent} sample complexity.  

\subsection{The Algorithm}

Algorithm~\ref{alg:instance-dependent} presents the pseudo-code to address the streaming BAI problem. Notice that our algorithm borrows the existing Exponential-Gap-Eliminaion algorithm~\citep{karnin2013almost,chen2017towards} as a framework. We substitute the selection component in this framework for Algorithm \ref{alg:main} (Line~\ref{alg:line} in Algorithm~\ref{alg:instance-dependent}), which is the major modification. Algorithm~\ref{alg:instance-dependent} runs in multiple rounds. Suboptimal arms in the stream are eliminated round by round until only one arm remains. In the $r$-th round, we maintain a set $S_r$ of the total $n$ arms in the stream, and $\varepsilon$-BAI algorithm is adopted as a subroutine to return an \eps-best arm $\arm^o_r$ from $S_r$. We then compute its estimated mean $I_r$ by pulling $\arm^o_r$ for $\frac{2}{\varepsilon_r^2} \log (\frac{1}{\delta_r})$ times. At the $r$-th round, we set the additional budget $B_r=\frac{6|S_r|}{\varepsilon_r^2}\log \big(\frac{40}{\delta_r} \big)$. If $B_r>0$,  for each arriving $\arm_i$ in $S_r\setminus \{\arm^o_r\}$, we pull it in multiple iterations. At the $\ell$-th iteration, we pull it $\frac{2^{\ell}}{\varepsilon_r^2} \log (\frac{40 h^2}{\delta_r})$ times to get its estimated mean $\hat{p}_{i}^{\ell}(r)$. Budget $B_r$ is decreased by $\frac{2^{\ell}}{\varepsilon_r^2} \log (\frac{40 h^2}{\delta_r})$ and $s_i$ is increased by $\frac{2^{\ell}}{\varepsilon_r^2} \log (\frac{40}{\delta_r})$ accordingly. If $\hat{p}_{i}^{\ell}(r)<I_r-\varepsilon_r$ holds ($\varepsilon_r=2^{-r}/4$), we remove $\arm_i$ from $S_r$ and consider the next arm in the stream till the number of pulls exceeds $\frac{2}{\varepsilon_r^2}\log\big(\frac{40h^2}{\delta_r} \big)$. If $B_r\leq 0$, we pull it $\frac{2}{\varepsilon_r^2} \log (\frac{40 h^2}{\delta_r})$ times to get its estimated mean $\hat{p}_{i}(r)$. If $\hat{p}_{i}(r)<I_r-\varepsilon_r$ holds, $\arm_i$ is removed from $S_r$.
We continue to check following arriving arms till the end of the stream. This procedure is repeated until $S_r$ contains only one $arm$ whose index is returned as our final output. Notice that for each round, the stream is only visited $O(1)$ times.



 \begin{algorithm}[!t]
	\caption{ID-BAI\label{alg:instance-dependent}}
	\KwIn{Parameter $\delta$ and a stream of arms.}
	\KwOut{The index of an arm.}
	Initialize $r\leftarrow 1$, $S_r=\{\arm_1,\arm_2,\cdots,\arm_n\}$\;
	\While{$|S_r|>1$}
	{
		$\varepsilon_r\gets2^{-r}/4$, $\delta_r\gets\delta/(40\cdot r^2)$, $h\leftarrow 1$\;
		$\arm^o_r$     $\leftarrow$  $\varepsilon$-BAI$(\eps_r,\delta_r, S_r)$\; \label{alg:line}
		pull $\arm^o_r$ $\frac{2}{\varepsilon_r^2} \log (\frac{1}{\delta_r})$ times and let $I_r$ be the estimated mean\; 
		$B_r\leftarrow \frac{6|S_r|}{\eps^2_r}\log \big(\frac{40}{\delta_r}\big)$\;
		\For{each arriving $\arm_i \in S_r\setminus \{\arm^o_r\}$}
		{
		\If{$B_r>0$}
		{
		$s_i\leftarrow 0$, $\ell \leftarrow 1$\;
		\While{$s_i\leq \frac{2}{\varepsilon^2_r}\log(\frac{40h^2}{\delta_r})$}
		{
			pull $\arm_i$ for $\frac{2^\ell}{\varepsilon_r^2} \log (\frac{40}{\delta_r})$ times, and let $\hat{p}_{i}^{\ell}(r)$ be the estimated mean\; \label{line-alg3-7}
			$B_r\leftarrow B_r- \frac{2^{\ell}}{\varepsilon_r^2} \log (\frac{40}{\delta_r})$\;
			$s_i \leftarrow s_i+\frac{2^{\ell}}{\varepsilon_r^2} \log (\frac{40}{\delta_r})$\;
			\If{$\hat{p}_{i}^{\ell}(r)<I_r-\varepsilon_r$}
			{ 
				remove $\arm_i$ from $S_r$\;
					$h\leftarrow h+1$\;
					break\;
			}
			$\ell\leftarrow \ell+1$\;
		}
		}
		\Else{	pull $\arm_i$ for $\frac{2}{\varepsilon_r^2} \log (\frac{40}{\delta_r})$ times, and let $\hat{p}_{i}(r)$ be the estimated mean\; \label{line-alg3-77}
			\If{$\hat{p}_{i}(r)<I_r-\varepsilon_r$}
			{ 
				remove $\arm_i$ from $S_r$\;
			}
		}
		}
		$r\leftarrow r+1$\;
	}
	\Return the index of the arm in $S_r$\;
\end{algorithm}

Our main result for Algorithm~\ref{alg:instance-dependent} is formalized in the following theorem.
\begin{theorem}\label{thm:id-BAI}
Given a stream of $n$ arms and confidence parameter $\delta \in (0, 1)$, Algorithm~\ref{alg:instance-dependent} identifies the optimal arm with probability at least $1-\delta$, in which case it takes expected  $O\big(\sum_{i=2}^{n}\frac{1}{\Delta_i^2} \log \big(\frac{1}{\delta}\log\frac{1}{\Delta_i} \big) \big)$  arm pulls and $O(\log \Delta_{2}^{-1})$ passes using a single-arm memory.  
\end{theorem}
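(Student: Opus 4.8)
\textbf{Proof proposal for Theorem~\ref{thm:id-BAI}.}

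The plan is to follow the Exponential-Gap-Elimination template of \citet{karnin2013almost,chen2017towards}, substituting our single-arm-memory \eps-BAI (Algorithm~\ref{alg:main}) for the median-elimination sub-routine, and then to verify correctness, pull complexity, and pass complexity separately. For \textbf{correctness}, I would argue round by round. In round $r$ the sub-routine $\eps$-BAI$(\eps_r,\delta_r,S_r)$ returns, with probability $1-\delta_r$, an arm $\arm^o_r$ whose mean is within $\eps_r$ of $\max_{\arm\in S_r}\mu_{\arm}$; in particular its mean is at least $\mu_{\arm^*}-\eps_r$. Pulling $\arm^o_r$ for $\frac{2}{\eps_r^2}\log\frac{1}{\delta_r}$ times gives, by Proposition~\ref{prop:chernoff}, an estimate $I_r$ with $|I_r-\mu_{\arm^o_r}|\le \eps_r/2$ with probability $1-2\delta_r$ (adjusting constants). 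Hence $I_r\ge \mu_{\arm^*}-2\eps_r$ on the good event. The elimination test removes $\arm_i$ only when its estimate (accurate to within $\eps_r/2$ using $\frac{2}{\eps_r^2}\log\frac{40h^2}{\delta_r}$ pulls, good with probability $1-\delta_r/(20h^2)$, and these events union-bound over $h$ to total failure $\le\delta_r/4$ since $\sum_h h^{-2}<2$) falls below $I_r-\eps_r$; I would check that on all good events $\arm^*$ is never eliminated, while every arm with $\mu_{\arm_i}\le \mu_{\arm^*}-4\eps_r$ \emph{is} eliminated in round $r$. Summing the per-round failure probabilities $\le c\,\delta_r = c\,\delta/(40 r^2)$ over $r\ge1$ gives total failure $\le\delta$, so the unique surviving arm is $\arm^*$ with probability $1-\delta$.

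For the \textbf{pull complexity}, note the gap $\Delta_i$ dictates that $\arm_i$ survives only while $4\eps_r\ge\Delta_i$, i.e.\ through round $r_i:=\lceil\log_2(1/\Delta_i)\rceil+O(1)$. I would bound the pulls spent on $\arm_i$ in round $r\le r_i$: the ``$B_r>0$'' branch doubles iteration lengths and stops once the running count exceeds $\frac{2}{\eps_r^2}\log\frac{40h^2}{\delta_r}$, so (geometric-sum argument) it spends $O(\frac{1}{\eps_r^2}\log\frac{h^2}{\delta_r})$ pulls on $\arm_i$; the ``$B_r\le0$'' branch spends $O(\frac{1}{\eps_r^2}\log\frac{1}{\delta_r})$. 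Since $\eps_r=2^{-r}/4$, the round-$r_i$ term dominates and contributes $O(\frac{1}{\Delta_i^2}\log(\frac{1}{\delta}\log\frac{1}{\Delta_i}))$ per arm after substituting $\delta_r=\delta/(40r^2)$ and $r_i=O(\log\frac1{\Delta_i})$; the geometric decay of $\eps_r^{-2}$ over earlier rounds keeps the sum of the same order. The sub-routine call in round $r$ costs, by Theorem~\ref{thm:main}, expected $O(\frac{|S_r|}{\eps_r^2}\log\frac1{\delta_r})$ pulls; charging this to the $\Theta(|S_r|)$ arms alive in round $r$ adds only a matching $O(\frac{1}{\Delta_i^2}\log(\frac1\delta\log\frac1{\Delta_i}))$ per arm, so in total $O\big(\sum_{i=2}^{n}\frac{1}{\Delta_i^2}\log(\frac{1}{\delta}\log\frac{1}{\Delta_i})\big)$. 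I also need that the extra budget $B_r$ is chosen exactly so that the more expensive ``$B_r>0$'' branch is used on at most $O(|S_r|)$ arms, which is where the $\frac{6|S_r|}{\eps_r^2}\log\frac{40}{\delta_r}$ setting enters.

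The \textbf{pass count} is the easy part: round $r$ reads the stream a constant number of times (one pass inside $\eps$-BAI plus one elimination pass), and the number of rounds is $r^*=O(\log\Delta_2^{-1})$ because once $4\eps_r<\Delta_2$ every arm except $\arm^*$ has been eliminated; thus $O(\log\Delta_2^{-1})$ passes, all with single-arm memory (the \eps-BAI sub-routine and the elimination loop each hold only the current arm plus $O(1)$ scalars). \textbf{The main obstacle} I anticipate is the pull-complexity accounting around the $B_r$ budget and the $\log h^2$ factors: one must show that the adaptive doubling in the ``$B_r>0$'' branch, together with the incrementing counter $h$, does not blow up the per-round cost beyond $O(\frac{|S_r|}{\eps_r^2}\log\frac{|S_r|}{\delta_r})$ in expectation, and then that summing these across the $O(\log\Delta_2^{-1})$ rounds telescopes — via $\eps_r^{-2}=\Theta(4^r)$ — into the advertised instance-dependent bound rather than an extra $\log n$ or $\log\frac1{\Delta_2}$ factor. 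Handling the expectation (the number of pulls is random because elimination times are random) cleanly, rather than just a high-probability statement, is the delicate point and I would isolate it in a lemma analogous to the sample-complexity lemmas in Appendix~\ref{appendix-1}.
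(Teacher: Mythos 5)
Your overall architecture is the same as the paper's: correctness by a per-round union bound over $\delta_r=\delta/(40r^2)$, pull complexity by charging each arm to its natural elimination round $r_i\approx\log_2(1/\Delta_i)$, and pass count by splitting rounds into $\eps_r\gtrsim\Delta_2$ (at most $O(\log\Delta_2^{-1})$ of them) versus $\eps_r\lesssim\Delta_2$ (after which all suboptimal arms die). The correctness and pass-count sketches are essentially the paper's Lemmas on correctness and passes, though for the pass count the paper is more careful than you are: it must show that in a round with $\eps_r<\Delta_2/3$ the elimination of \emph{all} suboptimal arms happens with \emph{constant} probability (it proves $\ge 3/5$), and this requires a Markov-inequality argument that the budget $B_r=\frac{6|S_r|}{\eps_r^2}\log\frac{40}{\delta_r}$ remains positive throughout the round with probability $\ge 4/5$ — otherwise the algorithm falls into the non-adaptive branch and the failure probabilities change. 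You mention the budget only in the pull-complexity discussion, so this step is incomplete as written.

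The genuine gap is in the pull complexity, and you correctly sense it but do not close it. You assert that $\arm_i$ ``survives only while $4\eps_r\ge\Delta_i$,'' but this holds only with high probability per round; the expected pull count is threatened precisely by the low-probability tail in which $\arm_i$ survives $t$ extra rounds, since the per-round cost grows as $\eps_{r_i+t}^{-2}=\Theta(4^t)\cdot\Delta_i^{-2}$. The missing ingredient is the quantitative survival bound the paper isolates as its key lemma: conditioned on the good event $\mathcal{E}$, $\Pr(\arm_i\in S_{r_i+t}\mid\mathcal{E})\le \delta/10^t$, obtained by showing that in each round past $r_i$ the arm is eliminated with probability at least $1-3\delta_{r_i+t}\ge 1-\delta/10$ and multiplying these conditional probabilities. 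The point is that the survival probability must decay geometrically at a rate \emph{strictly faster than} the factor $4$ by which the cost grows, so that $\sum_t 4^t\cdot 10^{-t}$ converges and the round-$r_i$ term dominates. Without stating and proving this lemma (or an equivalent), the claimed expected bound $O\big(\sum_i\frac{1}{\Delta_i^2}\log(\frac1\delta\log\frac1{\Delta_i})\big)$ does not follow from your sketch; deferring it to ``a lemma analogous to Appendix~\ref{appendix-1}'' is not enough, because the Appendix~\ref{appendix-1} machinery (the $\alpha$-randomization accounting) is not what is needed here — what is needed is the per-round elimination probability beating the geometric cost growth.
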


Here, we present some high level ideas why $O(\log \Delta_2^{-1})$ passes would suffice for the correctness of Algorithm \ref{alg:instance-dependent}. We consider $\varepsilon_r$ in the following two cases. 

\spara{Case 1} $\varepsilon_r\in (\Delta_2/3,1]$. In each round, Algorithm \ref{alg:instance-dependent} costs at most 2 passes. Therefore, the number of total passes is $O(\log \Delta_2^{-1})$. 

\spara{Case 2} $\varepsilon_r\leq \Delta_2/3$. According to Theorem \ref{thm:main}, $\arm^o_r$ is an $\eps_r$-best arm.  From Hoeffding bound, we have $I_r\geq \mu_{\arm_r^o}-\frac{\eps_r}{2}$. Besides,  the budget $B_r$ ensures that with high probability  $\hat{\mu}_{\arm_i}\leq \mu_{\arm_i}+\frac{\eps_r}{2}$ (see the proofs for details). Therefore,
\begin{align*}
    I_r-\hat{\mu}_{\arm_i}& \geq \mu_{\arm_r^o}-\frac{\eps_r}{2}-\hat{\mu}_{\arm_i} \geq\mu_{\arm_r^o}-{\eps_r}-{\mu}_{\arm_i}  \notag \\
    & \geq \mu_{\arm^*}-{\mu}_{\arm_i} -2\varepsilon_r\geq \varepsilon_r,
\end{align*}
which indicates that with high probability, all suboptimal arms  will be eliminated in the current round when $\eps_r\leq \Delta_2/3$. Therefore, it takes $O(1)$ passes in such case. The detail proofs are in Appendix \ref{appendix-3}. 

Note that previous BAI algorithms~\citep{karnin2013almost,tianyuan2019efficient} run in $R$ rounds. In each round, the number of pulls of each arm is fixed. As a comparison, one can convert those algorithms into streaming algorithms in $R$ passes. In this regard, the previous best known algorithm~\citep{tianyuan2019efficient} will run in $\log^*(n)\cdot \log (1/\Delta_2)$ passes, which is inferior to our $\log (1/\Delta_2)$ passes. For sample complexity, our algorithm achieves the optimal instance-dependent sample complexity up to a $\log\log(1/\Delta_2)$ term, compared with the lower bound in \citet{chen2017towards}.

\section{Additional Related Work}
We review the related work, excluding those \citep{assadi2020exploration,maiti2020streaming,falahatgaroptimal20} discussed in Section \ref{state-art}. The problem of best arm identification is mostly considered in a non-streaming setting in the literature. Normally, two types of sample complexity are considered: {\it instance-dependent complexity} and {\it instance-independent complexity}. 
 
\spara{Instance-independent arm selection} Existing work \citep{even2002pac, kalyanakrishnan2010efficient, cao2015top, tianyuan2019efficient} achieves the optimal worst sample complexity $\Omega(\frac{n}{\varepsilon^2}\log\frac{k}{\delta})$, matching the lower bound in \citep{kalyanakrishnan2012pac}. In addition, the recent work \citep{hassidim2020optimal} shows that the complexity of identifying an \eps-best arm can be reduced to $\frac{n}{2\eps^2}\log \frac{1}{\delta}$. However, all these algorithms require $\Theta(n)$ memory, which is  inferior to ours.  Recently, \citet{assadi2020exploration} show that one can modify an $r$-round algorithm to an $r$-memory algorithm. In this way, the previous best known algorithm can be modified to a $O(\log^*(n))$ memory algorithm, which is also inferior to ours.  
 
\spara{Instance-dependent arm selection} The instance-dependent
sample complexity is closely tied to the bandit instance and is superior to the instance-independent complexity for ‘easy’ bandit instances. Existing work \citep{karnin2013almost,jamieson2014lil,chen2017towards,chen2017nearly,tianyuan2019efficient,tao2019collaborative,chen2017adaptive} focuses on achieving the optimal instance-dependent sample complexity. The algorithm in \citep{karnin2013almost,jamieson2014lil}  achieves the sample complexity $O\big(\sum_{i=2}^{n}\frac{1}{\Delta_i^2}\log \big(\frac{1}{\delta}\log\frac{1}{\Delta_i} \big) \big)$. Furthermore, \citet{jamieson2014lil} prove a lower bound such that for some instances the BAI problem needs at least $\Omega\big(\sum_{i=2}^{n}\frac{1}{\Delta_i^2} \log \big(\frac{1}{\delta}\log\frac{1}{\Delta_i} \big) \big)$ samples.  Recently, \citet{chen2017towards} propose an instance-wise lower bound  and almost optimal upper bound for the BAI problem. In another line of the work, \citet{garivier2016optimal} present a constant optimal algorithm under the assumption $\delta\rightarrow 0$.   

\spara{Regret Minimization in Streaming Model}
\citep{LiauSPY18, ChaudhuriK19} study the streaming bandits for regret minimization. Both algorithms consume $O(1)$ memory and visit the stream multiple passes. Since we achieve different objectives , their regret bound is not directly comparable to our sample complexity bound. It is interesting to see whether our algorithm can help to improve their results.

\section{Conclusion}\label{sec:conclusion}

We study the streaming \eps-\topk arms identification (\eps-KAI) problem and the streaming BAI problem. For \eps-KAI, we propose the first algorithm that applies to any $k$ and achieves the optimal sample complexity using a single-arm memory without any explicit assumptions. For streaming BAI, we present a single-arm memory algorithm that achieves a near instance-dependent optimal sample complexity within $O(\log \Delta_2^{-1})$ passes.

 \section*{Acknowledgement}
We would like to thank Pan Xu for helpful discussions.  This work is supported by the Ministry of Education, Singapore, under Tier-2 Grant R-252-000-A70-112, and by the National Research Foundation, Singapore
under its AI Singapore Programme (AISG Award No: AISG-PhD/2021-01-004[T]).  The views and conclusions contained in this paper are those of the authors and should not be interpreted as representing any funding agencies.
\bibliography{streaming.bib}
\bibliographystyle{icml2021}

\appendix
\newpage
\onecolumn
\section{Proof of Sample Complexity of Algorithm~\ref{alg:main}}\label{appendix-1}
\eat{ The analysis of sample complexity follows by following idea:
\begin{enumerate}
\item when the arriving arm $\arm_i$  satisfies $\mu_{\arm_i}\leq \phat_{\arm^o}+\frac{3\varepsilon}{8}$. Since $\alpha=\frac{\varepsilon}{2}$ with high probability, by Hoeffding inequality, with high probability, the $\arm_i$ will be eliminated within $O(\frac{1}{\eps^2}\log \frac{1}{\delta})$ pulls.  Taking expectation on $\alpha$, the expected number of pulls for $\arm_i$ is  $O({1}/{\varepsilon^2}\log ({1}/{\delta}))$;
\item when the arriving arm $\arm_i$  satisfies $\mu_{\arm_i}\geq \phat_{\arm^o}+\frac{3\varepsilon}{8}$. The number of pulls of $\arm_i$ is upper bounded by $\tau_j$. However, this will not happen over times. The reason is that  when $\alpha=\frac{\varepsilon}{4}$,  applying Hoeffding bound, with high probability $\arm_i$ will be the new $\arm^o$. This ensures that current $\arm^o$ can not eliminate too many arms with $\Omega(\tau_j)$ pulls.  
\end{enumerate}
Now, we start our proof.}

\begin{proof}[Proof of Sample Complexity of Algorithm~\ref{alg:main}]
Given $\arm^o(t)$, the aim is to bound the expected number of pulls of each arriving arm between the $t$-th and $t+1$-th best arm change. 
Let $A_{t,j}$ be the $j$-th passed arm between the $t$-th and $(t+1)$-th best arm change, and $N_{\arm_i}$ be the number of pulls of arm $\arm_i$. We consider two cases. For first case $ \mu_{A_{t,j}} \le \phat_{\arm^o(t)} + \frac{3\varepsilon}{8}$, we will prove that the expected number of pulls of $A_{t,j}$ is optimal, \ie $O(1/\eps^2 \log (1/\delta))$. For second case $ \mu_{A_{t,j}} > \phat_{\arm^o(t)} + \frac{3\varepsilon}{8}$, we will prove that with probability $\Pr(\alpha=\eps/4)/2$, $A_{t,j}$ will be the new $\arm^o$.


 \textbf{Case 1: $\mu_{A_{t,j}} \le \phat_{\arm^o(t)} + \frac{3\varepsilon}{8}$.} Let $X_t(j)$ be the event $\mu_{A_{t,j}} \le \phat_{\arm^o(t)} + \frac{3\varepsilon}{8}$. 
 Conditioned on event $\alpha=\varepsilon/2$, $X_{t}(j)$ and $\phat_{A_{t,j}} \leq  \mu_{A_{t,j}}+\frac{\varepsilon}{8}$, then  $\phat_{A_{t,j}} \leq  \mu_{A_{t,j}}+\frac{\varepsilon}{8} \leq \phat_{\arm^o(t)}+\frac{\varepsilon}{2}=\phat_{\arm^o(t)}+\alpha$ and $A_{t,j}$ will be eliminated. Let $\phat_{A_{t,j}}(\ell)$ be the estimated mean of  $A_{t,j}$ in the $\ell$-th round. Therefore,
 \begin{align}
     \Pr \bigg(N_{A_{t,j}}> s_{\ell} \ \bigg | \ \alpha=\frac{\varepsilon}{2},  X_{t}(j)\bigg)& \leq \Pr\bigg(\phat_{A_{t,j}}(\ell) \geq  \mu_{A_{t,j}}+\frac{\varepsilon}{8}\bigg) \notag \\
     & \leq \frac{2\delta}{C}\exp \big(-2^{\ell-1} \big) \notag \\
     & \leq \frac{1}{8^{\ell}}.
     \end{align}
 Note that $N_{A_{t,j}}\leq 2\tau_j$ and $\Pr(\alpha=\frac{\varepsilon}{4})=\frac{1}{\log j+1}$. We have
 \begin{align}
 \label{eq:samplecom-1}
    \EE[N_{A_{t,j}}\mid X_{t}(j)] & =\EE\bigg[N_{A_{t,j}}\ind\bigg(\alpha=\frac{\varepsilon}{2}\bigg) \ \bigg| \ X_{t}(j)\bigg]+\EE\bigg[N_{A_{t,j}}\ind\bigg(\alpha=\frac{\varepsilon}{4}\bigg)\ \bigg| \ X_{t}(j)\bigg] \notag \\
    & \leq  \sum_{\ell=1} \bigg( \frac{s_\ell}{8^{\ell-1}} \bigg)+\frac{2\tau_j}{ \log j+1} \notag \\
    & =O\bigg(\frac{1}{\varepsilon^2} \log \frac{1}{\delta}\bigg).
 \end{align}
 
 \textbf{Case 2: $ \mu_{A_{t,j}} > \phat_{\arm^o(t)} + \frac{3\varepsilon}{8}$.}  Let $X^c_t(j)$ be the event $ \mu_{A_{t,j}} > \phat_{\arm^o(t)} + \frac{3\varepsilon}{8}$. Then 
 \begin{align}
 \label{eq:samplecom-2}
   \EE[N_{A_{t,j}}\mid X_{t}^c(j)]\leq 2\tau_j=O\bigg(\frac{1}{\varepsilon^2}\log \frac{j^2}{\delta}\bigg).  \end{align}
 Note that conditioned on event $\alpha=\frac{\varepsilon}{4} $ and $X^c_t(j)$, if $\phat_{A_{t,j}}\geq \mu_{A_{t,j}}-\frac{\varepsilon}{8}$, then $\phat_{A_{t,j}} \geq \phat_{\arm^o(t)}+\frac{\varepsilon}{4}$ and thus $A_{t,j}=\arm^o(t+1)$.  
 Hence, conditioned on event $\alpha=\frac{\varepsilon}{4}$ and $X^c_t(j)$,
 \begin{align}
     \Pr\bigg(\phat_{A_{t,j}}(\ell)\leq \phat_{\arm^o(t)}+\frac{\varepsilon}{4} \bigg) & \leq \Pr\bigg(\phat_{A_{t,j}}(\ell)\leq \mu_{A_{t,j}}-\frac{\varepsilon}{8}\bigg) \notag \\
     & \leq \frac{2\delta}{C}\exp \big(-2^{\ell-1} \big) \leq \frac{1}{8^{\ell}}.
 \end{align}
Therefore, conditioned on event $\alpha=\frac{\varepsilon}{4}$ and $X^c_t(j)$,
 \begin{align}
 \label{eq:the1-M=i}
    \Pr\big(A_{t,j}=\arm^o(t+1)\big)\geq  1- \sum_{\ell=1}^{\infty}\Pr\bigg(\phat_{\arm^o(t)}\geq \phat_{A_{t,j}}(\ell)-\frac{\varepsilon}{4}\bigg)\geq \frac{1}{2}.
 \end{align}
 Let 
  \begin{align}
  \label{def:St}
     \mathcal{S}_{t}=\bigg\{ Q_{t,r} \;\big|\; r> 32 \text{ and } \mu_{Q_{t,r}}>\phat_{\arm^o(t)}+\frac{3\varepsilon}{8} \bigg\}.
\end{align}
Now we are ready to bound the sample complexity. Let $A_t$ be the set of passed arms between $t$-th and $(t+1)$-th best arm change. Recall $Q_{t,r}$ is the $r$-th arriving arm after $t$-th best arm change.  Let $O_{t,r}$ be the event $Q_{t,r}\in A_t$.  We assume $Q_{t,1}=\arm_{i+1}$. Recall $n$ is the  number of arms. Let $W_t$ be the expected number of pulls of each arm between $t$-th and $(t+1)$-th best arm change. We have
\begin{align}
    W_t=\sum_{r=1}^{n-i}\Pr(O_{t,r}) \EE[N_{Q_{t,r}}\mid O_{t,r} ].
\end{align}
We will prove 
\begin{align}
\label{eq:sample-main}
    W_t\leq O\bigg(\frac{\log\delta^{-1}}{\varepsilon^2}\sum_{r=1}^{n-i}\Pr(O_{t,r})\bigg). \end{align}
If~\eqref{eq:sample-main} holds, then
the sample complexity of Algorithm~\ref{alg:main} is
\begin{align}
     &\sum_{i=1}^{n} \EE[N_{\arm_i}] \notag \\
    =& \EE[N_{\arm_1}]+\sum_{i=1}^{n}\sum_{r=1}^{n-i}\sum_{t=1}^{i} \EE[N_{Q_{t,r}}\ind(\arm^o(t)=\arm_{i},O_{t,r})] \notag \\
     =&\EE[N_{\arm_1}]+\sum_{i=1}^{n}\sum_{t=1}^{i} \bigg(\Pr(\arm^o(t)=\arm_{i})    \notag \\
     & \qquad \qquad \cdot \bigg(\sum_{r=1}^{n-i} \EE[N_{Q_{t,r}} \mid \arm^o(t)=\arm_{i},O_{t,r}] \Pr(O_{t,r} \mid \arm^o(t)=\arm_{i})\bigg) \bigg) \notag \\
     \leq &O\bigg(\frac{\log \delta^{-1}}{\varepsilon^2}\bigg)\sum_{i=1}^{n}\sum_{t=1}^{i} \bigg(\Pr(\arm^o(t)=\arm_{i})\sum_{r=1}^{n-i}\Pr(O_{t,r} \mid \arm^o(t)=\arm_{i})\bigg) \notag \\
     \leq &  O\bigg(\frac{\log \delta^{-1}}{\varepsilon^2}\bigg)\sum_{i=1}^{n}\sum_{r=1}^{n-i}\sum_{t=1}^{i} \bigg(\Pr(\arm^o(t)=\arm_{i})\Pr(O_{t,r} \mid \arm^o(t)=\arm_{i})\bigg) \notag \\
     = & O\bigg(\frac{\log \delta^{-1}}{\varepsilon^2}\bigg)\sum_{i=1}^{n}\sum_{r=1}^{n-i}\sum_{t=1}^{i} \bigg(\ind(\arm^o(t)=\arm_i,O_{t,r})\bigg) \notag \\
     = & O\bigg(\frac{n}{\varepsilon^2} \log\frac{1}{\delta}\bigg),
     \end{align}
    where we apply~\eqref{eq:sample-main} in first inequality.  
Now, we focus on proving~\eqref{eq:sample-main}. For $W_t$, we decompose it to
\begin{align*}
 & \sum_{r=1}^{n-i}\Pr(O_{t,r}) \EE[N_{Q_{t,r}}\mid O_{t,r}] \notag \\
 =&\sum_{r:Q_{t,r}\notin S_t}\Pr(O_r) \EE[N_{Q_{t,r}}\mid O_{t,r}]+\sum_{r:Q_{t,r}\in S_t}\Pr(O_{t,r}) \EE[N_{Q_{t,r}}\mid O_{t,r}] \notag \\
  =& 2\underbrace{\sum_{r:Q_{t,r}\notin S_t}\Pr(O_{t,r}) \EE[N_{Q_{t,r}}\mid O_{t,r}]}_{I_1}+\underbrace{\sum_{r:Q_{t,r}\in S_t}\Pr(O_{t,r}) \EE[N_{Q_{t,r}}\mid O_{t,r}]}_{I_2}  -\underbrace{\sum_{r:Q_{t,r}\notin S_t}\Pr(O_{t,r}) \EE[N_{Q_{t,r}}\mid O_{t,r}]}_{I_3}.
\end{align*}
{\bf Bounding term $I_1$:} If $\mu_{{Q_{t,r}}}\leq \phat_{\arm^o(t)}+\frac{3\varepsilon}{8}$,  from~\eqref{eq:samplecom-1}, $\EE[N_{Q_{t,r}}\mid O_{t,r}]=O\big(\frac{1}{\varepsilon^2}\log\frac{1}{\delta}\big)$. If $r<32$, $\EE[N_{Q_{t,r}}\mid O_{t,r}]\leq 2\tau_r=O\big(\frac{1}{\varepsilon^2}\log\frac{1}{\delta}\big)$. Combining above two cases together, we have $I_1=O\big(\frac{1}{\varepsilon^2}\log\frac{1}{\delta} \sum_{r:Q_{t,r}\notin S_t}\Pr(O_{t,r})\big)$.\\
{\bf Bounding term $I_2-I_3$:} We first decompose term $I_3$. Assume $\cS_{t}=\{Q_{t,r_1},Q_{t,r_2},\cdots \}$ and  $r_1<r_2<\cdots<r_m<\cdots$. Let $r_0=0$,
\begin{align*}
    T_m =(\Pr(O_{t,r_m})-\Pr(O_{t,r_{m}+1}))\sum_{x=1}^{m}\sum_{r=r_{x-1}+1}^{r_x-1} \EE[N_{Q_{t,r}}\mid O_{t,r}].
\end{align*}
Next, we show that $\sum_{m=1}^{|\cS_t|}T_m\leq I_3$.  We denote the coefficient of $\EE[N_{Q_{t,r}}\mid O_{t,r}]$ in $\sum_{m=1}^{|\cS_t|}T_m$ as $c_r$. Assume $r\in (r_{m-1},r_{m})$,  we have
\begin{align}
    c_r&=\sum_{x=m}^{|\cS_t|} \Pr(O_{t,r_x})-\Pr(O_{t,r_x+1}).
\end{align}
 Since $\Pr(O_{t,1})\geq \Pr(O_{t,2})\geq \Pr(O_{t,3})\geq\cdots$, we obtain $c_r\leq \Pr(O_{t,r_m})$. Note that for  $r\in (r_{m-1},r_{m})$, the coefficient of $\EE[N_{Q_{t,r}}\mid O_{t,r}]$ in $I_3$ is $\Pr(O_{t,r})$ and $\Pr(O_{t,r})\geq \Pr(O_{t,r_m})\geq c_r$, we have
 \begin{align}
 \label{eq:bounding-term-I_3}
     \sum_{m=1}^{|\cS_{t}|}T_m\leq I_3.
 \end{align}
Next, we decompose term $I_2$. Define
 \begin{align*}
   L_m=\Pr(O_{t,r_m})\EE[N_{Q_{t,r_m}}\mid O_{t,r_m}]-T_m-s_1(m-1)(\Pr(O_{t,r_m})-\Pr(O_{t,r_m+1})).
\end{align*}
We will show 
\begin{align}
\label{eq:samplecom-Lm}
    L_m\leq -\Pr(O_{t,r_m})s_1
\end{align}
holds for  $m\geq 1$.
For $Q_{t,r}\in \cS_t$,
 \begin{align}
    \Pr(O_{t,r+1}\mid O_{t,r})&\leq \Pr\bigg(O_{t,r+1}\ \bigg| \ \alpha =\frac{\varepsilon}{4},O_{t,r}\bigg)\Pr\bigg(\alpha=\frac{\varepsilon}{4} \ \bigg| \ O_{t,r}\bigg)+\bigg(1-\Pr\bigg(\alpha=\frac{\varepsilon}{4} \ \bigg| \ O_{t,r}\bigg)\bigg) \notag \\
    & = \Pr\bigg(O_{t,r+1}\ \bigg| \ \alpha = \frac{\varepsilon}{4},O_{t,r}\bigg)\cdot \frac{1}{\log r+1}+\bigg(1-\frac{1}{\log r+1}\bigg) \notag \\
    &= \bigg(1-\Pr\bigg(Q_{t,r}=\arm^o(t+1)\ \bigg| \ \alpha=\frac{\varepsilon}{4},O_{t,r}\bigg)\bigg)\cdot \frac{1}{\log r+1}+\bigg(1-\frac{1}{\log r+1}\bigg) \notag \\
    & \leq 1-\frac{1}{2(\log r+1)},
\end{align}
 where the first equality is due to event $\alpha=\frac{\varepsilon}{4}$ is independent of $O_{t,r}$ and $\Pr(\alpha=\frac{\varepsilon}{4})=\frac{1}{\log r+1}$, the last inequality is due to~\eqref{eq:the1-M=i}. \\ 
 Therefore for $Q_{t,r}\in \cS_t$, 
\begin{align}
\label{eq:samplecom-boundOtr}
    \Pr(O_{t,r+1})=\Pr(O_{t,r+1}\mid O_{t,r})\Pr(O_{t,r})\leq \bigg(1-\frac{1}{2(\log r+1)}\bigg)\Pr(O_{t,r}).
\end{align}
  Since  each arm is pulled at least $s_1$ times, we have 
\begin{align}
\label{eq:samplecomtermI4}
    \frac{T_m}{(\Pr(O_{t,r_m})-\Pr(O_{t,r_m+1}))} &= \sum_{x=1}^m\sum_{r=r_{x-1}}^{r_x-1}\EE[N_{Q_{t,r}}|O_{t,r}] \notag \\
    & \geq s_1\sum_{x=1}^m (r_x-r_{x-1}-1) \notag \\
    &\geq (r_m-m)s_1.
\end{align}
Therefore,
\begin{align}
\label{eq:samplecom-Lm'}
L_m=&\Pr(O_{t,r_m})\EE[N_{Q_{t,r_m}}\mid O_{t,r_m}]- T_m -s_1(m-1) (\Pr(O_{t,r_m})-\Pr(O_{t,r_m+1})) \notag \\
  \leq &  2\tau_{r_m}\Pr(O_{t,r_m})-s_1(r_m-m) (\Pr(O_{t,r_m})-\Pr(O_{t,r_m+1})) -s_1(m-1) (\Pr(O_{t,r_m})-\Pr(O_{t,r_m+1})) \notag \\
    = &  2\tau_{r_m}\Pr(O_{t,r_m})- s_1(r_m-1)(\Pr(O_{t,r_m})-\Pr(O_{t,r_m+1}))  \notag \\ 
    \leq & 2\tau_{r_m}\Pr(O_{t,r_m})-\frac{s_1}{2(\log r_m+1)}(r_m-1)\Pr(O_{t,r_m}) \notag \\
    \leq & -\Pr(O_{t,r_m})s_1,
\end{align}
where the first inequality is due to~\eqref{eq:samplecomtermI4} and the factor $N_{Q_{t,r}}\leq 2\tau_r$ (equation~\eqref{eq:samplecom-2}), the third inequality is due to \eqref{eq:samplecom-boundOtr}, the last inequality is due to the factor that $r_m\geq 32$ (see the definition of $\cS_t$ and $Q_{t,r_m}$).
Note that 
\begin{align}
\label{eq:finalsam1}
    \Pr(O_{t,r_m})\geq \sum_{x=r_m}^{\infty} \big(\Pr(O_{t,x})-\Pr(O_{t,x+1})\big)\geq \sum_{x=m}^{|\cS_t|} \big(\Pr(O_{t,r_x})-\Pr(O_{t,r_x+1})\big),
\end{align}
where $\cS_t$ is defined in~\eqref{def:St}.
We obtain $I_2-I_3$
\begin{align*}
& = \sum_{m=1}^{|\cS_t|}\Pr(O_{t,r_m}) \EE[N_{Q_{t,r_m}}\mid O_{t,r_m}]  -I_3 \notag \\
& \leq \sum_{m=1}^{|\cS_t|}\Pr(O_{t,r_m}) \EE[N_{Q_{t,r_m}}\mid O_{t,r_m}] -\sum_{m=1}^{|\cS_t|} T_m \notag \\
& = \sum_{m=2}^{|\cS_t|}\Pr(O_{t,r_m}) \EE[N_{Q_{t,r_m}}\mid O_{t,r_m}] -\sum_{m=2}^{|\cS_t|} T_m+L_1 \notag \\
& \leq \sum_{m=2}^{|\cS_t|}\Pr(O_{t,r_m}) \EE[N_{Q_{t,r_m}}\mid O_{t,r_m}] -\sum_{m=2}^{|\cS_t|} T_m-s_1\sum_{m=2}^{|\cS_t|}\Pr(O_{t,r_m})-\Pr(O_{t,r_{m}+1}) \notag \\
& = \sum_{m=3}^{|\cS_t|}\Pr(O_{t,r_m}) \EE[N_{Q_{t,r_m}}\mid O_{t,r_m}] -\sum_{m=3}^{|\cS_t|} T_m-s_1\sum_{m=3}^{|\cS_t|}\Pr(O_{t,r_m})-\Pr(O_{t,r_{m}+1})+L_2 \notag \\
&\leq \sum_{m=3}^{|\cS_t|}\Pr(O_{t,r_m}) \EE[N_{Q_{t,r_m}}\mid O_{t,r_m}] -\sum_{m=3}^{|\cS_t|} T_m-2s_1\sum_{m=3}^{|\cS_t|}\Pr(O_{t,r_m})-\Pr(O_{t,r_{m}+1}) \notag  \\
& = \cdots \leq \cdots =\cdots \leq \cdots \notag \\ 
& =\Pr(O_{t,r_{|\cS_t|}}) \EE[N_{Q_{t,r_{|\cS_t|}}}\mid O_{t,r_{|\cS_t|}}]-T_{|\cS_t|}-s_1(|\cS_t|-1)(\Pr(O_{t,r_{|\cS_t|}})-\Pr(O_{t,r_{|\cS_t|}+1})) \notag \\
& =L_{|\cS_t|} \leq 0,
  \end{align*}
  where the first inequality is from \eqref{eq:bounding-term-I_3}, the second inequality is due to \eqref{eq:finalsam1} and $L_1\leq -\Pr(O_{t,r_1})s_1$ from \eqref{eq:samplecom-Lm'}. Therefore $W_t=2I_1+I_2-I_3\leq 2I_1 \leq O\big(\frac{\log\delta^{-1}}{\varepsilon^2}\sum_{r=1}^{n-i}\Pr(O_{t,r})\big)$, which competes the proof.
\end{proof}

\section{Proofs of Theorem \ref{thm:top-k}.}
\label{appendix-2}

We refer to the event that $\cA$ is added a new arm (in Line 2 or 11 of Algorithm~\ref{alg:topk}) as a \topk arm change. We use $\tok^o(t)$ to denote the arm with the minimum expected reward in $\cA$ after the $t$-th \topk arm change. Hence, $\tok^o(1)=\arm_1$. We denote $\arm^o(t)$ for the arm that is added to $\cA$ for $t$-th \topk arm change.  Besides, let $\cA_t$ be the version of $\cA$ after $t$-th \topk arm change.  Assume that the last arm inserted to $\cA$ is $\arm^o(T)$. Hence, the returned version of $\cA$ is $\cA_T$. We  use $\arm^*(k)$ to denote $k$-th largest arm. 

Similarly, the proof of correctness consists of two parts. 
In the first part, we establish the relation between all $\arm^o$ and $\arm^*(k)$ in Lemma~\ref{lem:main-1-topk}. In the second part, we then complete the correctness proof based on the result of Lemma~\ref{lem:main-1-topk}.

\begin{lemma}
\label{lem:main-1-topk}
For any $\delta\in(0,1)$, it holds that
\begin{align*}
    \Pr\bigg(\cap_{t\geq1}\bigg\{\bigg\{\phat_{\arm^o(t)}< \mu_{\arm^*(k)}-\frac{5\varepsilon}{8}\bigg\}\cup\bigg\{\mu_{\arm^o(t)}\geq \mu_{\arm^*(k)}-\varepsilon\bigg\}\bigg\}\bigg)\geq 1-3/4\delta.
\end{align*}
\end{lemma}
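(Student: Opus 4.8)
\textbf{Proof proposal for Lemma~\ref{lem:main-1-topk}.}
The plan is to mirror the structure of the proof of Lemma~\ref{lem:main-1}, but replacing the role of $\arm^\ast$ by the $k$-th best arm $\arm^\ast(k)$ and accounting for the fact that a \topk arm change now also happens when one of the original first $k$ arms gets kicked out of $\cA$. First I would set up the analogous virtual sampling process: for the $p$-th passed arm after the $t$-th \topk arm change, if it is not pulled $\tau_p$ times by the end of the run, we keep pulling it (virtually) to $s(p)$ times, where $s(p):=s_\ell$ with $s_{\ell-1}<\tau_p\le s_\ell$, obtaining a clean estimate $\phat'_{Q_{t,p}}$. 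Because the algorithm's sample-count schedule $s_\ell$ and threshold $\tau_j$ are exactly those of Algorithm~\ref{alg:main} with $\delta$ replaced by $\delta/k$ inside the logarithms, each fixed deviation event $\{|\phat'_{Q_{t,p}}-\mu_{Q_{t,p}}|\ge \frac{d\varepsilon}{8}\}$ now has probability at most $\frac{2\delta}{p^2 k C^{d}}$ by Hoeffding (Proposition~\ref{prop:chernoff}); the extra $1/k$ factor is what will eventually absorb the union bound over the $k$ ``reference'' arms, but here it only makes the bounds stronger.

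Next I would reuse verbatim the four-case decomposition of Lemma~\ref{lem:main-1}: define the shells $\cQ_d(T)=\{Q_{t,p}:\mu_{Q_{t,p}}\in(\mu_{\arm^\ast(k)}-\frac{d\varepsilon}{8},\mu_{\arm^\ast(k)}-\frac{(d-1)\varepsilon}{8}]\}$, the events $G_d$, $G_d(t)$ measuring whether some $\arm^o(t)$ is far below $\mu_{\arm^\ast(k)}$, and the ``good estimate'' events $E_{d1},E_{d2}(t)$ (for $d\ge 9$, tolerance $\frac{(d-8)\varepsilon}{8}$) and $E_{d3},E_{d4}(t)$ (for $d<9$, tolerance $\frac{d\varepsilon}{8}$), with the key observation that if $|\phat'_{Q_{t,p}}-\mu_{Q_{t,p}}|$ stays within the tolerance for all $p$ then $Q_{t,p}$ does become $\arm^o(t+1)$ under the update rule. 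Then I would define the indicator $I$ exactly as before (requiring all four families of ``good estimate'' events to hold whenever their triggering event holds) and bound $\Pr(I=1)$ by the same chain-rule / product-of-conditional-probabilities argument: $\PP(I=1)\ge \prod_{d}\big(1-\tfrac{4\delta}{kC^{\cdots}}\big)\prod_{d,t}\big(1-\tfrac{4\delta}{kC^{\cdots}}\big)$. The counting step — that under $I=1$ we have $\phat_{\arm^o(T)}\le \mu_{\arm^\ast(k)}+\varepsilon/8$ and, since consecutive $\arm^o$'s have estimated means increasing by at least $\varepsilon/4$, there are at most $d+3$ values of $t$ with $\phat_{\arm^o(t)}$ above the relevant level $L=\mu_{\arm^\ast(k)}-\frac{d\varepsilon}{4}-\frac{\varepsilon}{2}$ — goes through unchanged, yielding $\Pr(I=1)\ge 1-\frac34\delta$ for $C\ge 100$. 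Finally, defining $r_t$ by $\mu_{\arm^o(t)}\in(\mu_{\arm^\ast(k)}-\frac{r_t\varepsilon}{8},\mu_{\arm^\ast(k)}-\frac{(r_t-1)\varepsilon}{8}]$, the inclusion $\{I=1\}\subseteq \bigcap_t(\{|\phat_{\arm^o(t)}-\mu_{\arm^o(t)}|\le\frac{(r_t-8)\varepsilon}{8}\}\cup\{\mu_{\arm^o(t)}\ge\mu_{\arm^\ast(k)}-\varepsilon\})\subseteq \bigcap_t(\{\phat_{\arm^o(t)}<\mu_{\arm^\ast(k)}-\frac{5\varepsilon}{8}\}\cup\{\mu_{\arm^o(t)}\ge\mu_{\arm^\ast(k)}-\varepsilon\})$ follows from the same one-line arithmetic $\phat_{\arm^o(t)}\le\mu_{\arm^o(t)}+\frac{(r_t-8)\varepsilon}{8}\le\mu_{\arm^\ast(k)}-\frac{(r_t-1)\varepsilon}{8}+\frac{(r_t-8)\varepsilon}{8}<\mu_{\arm^\ast(k)}-\frac{5\varepsilon}{8}$.

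I expect the main obstacle to be not the probabilistic estimates (which transfer almost mechanically, the $1/k$ factor only helping) but the combinatorial bookkeeping of the update rule in Algorithm~\ref{alg:topk}: here a \topk arm change may insert $\arm_i$ without $\tok^o$ ``beating'' a full $\tau_j$ worth of arms in the usual sense, and the newly inserted arm's estimate must be controlled relative to $\mu_{\arm^\ast(k)}$ rather than relative to a single moving champion. The careful point is to check that the inequality $\phat_{\arm^o(t)}\ge\phat_{\arm^o(t-1)}+\frac{\varepsilon}{4}$ — which drives the ``at most $d+3$ occurrences'' counting — still holds for the sequence of \emph{inserted} arms (this uses $\alpha\ge\varepsilon/4$ together with the fact that $\tok^o$ is always the current minimum of $\cA$, so each insertion strictly raises the running estimate of the arm being displaced), and that $\cQ_d(T)$ correctly captures the arms whose means lie just below $\mu_{\arm^\ast(k)}$. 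Once this is verified, the rest is a line-by-line translation of the proof of Lemma~\ref{lem:main-1}, and I would present it as such, pointing explicitly to the places where $\delta$ becomes $\delta/k$ and where $\arm^\ast$ becomes $\arm^\ast(k)$.
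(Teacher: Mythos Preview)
Your proposal tracks the paper's proof closely in structure, but there is a genuine gap in the counting step, and it is exactly the place you flag as the ``careful point.''

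The inequality $\phat_{\arm^o(t)}\ge\phat_{\arm^o(t-1)}+\frac{\varepsilon}{4}$ does \emph{not} hold for the sequence of inserted arms in Algorithm~\ref{alg:topk}. What the update rule gives is $\phat_{\arm^o(t)}\ge\phat_{\tok^o(t-1)}+\frac{\varepsilon}{4}$, where $\tok^o(t-1)$ is the current minimum of $\cA$, and in general $\phat_{\tok^o(t-1)}\le\phat_{\arm^o(t-1)}$ (the arm you just inserted need not be the new minimum). For the same reason, the events $G_d(t)$ must be phrased in terms of $\phat_{\tok^o(t)}$ rather than $\phat_{\arm^o(t)}$: the algorithm compares every arriving arm against $\tok^o$, so the argument that ``if the estimate is accurate then $Q_{t,p}$ becomes $\arm^o(t+1)$'' only goes through when the benchmark in $G_d(t)$ is $\tok^o(t)$.

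The correct monotonicity is that $\phat_{\tok^o(t+k)}\ge\phat_{\tok^o(t)}+\frac{\varepsilon}{4}$: after $k$ further insertions every element of $\cA_t$ has been displaced, and each newly inserted arm has estimate at least $\phat_{\tok^o(t)}+\frac{\varepsilon}{4}$, so the new minimum does too. Consequently the number of $t$'s with $\phat_{\tok^o(t)}>L$ is at most $(d+3)k$, not $d+3$. This is precisely where the extra $1/k$ in the Hoeffding bounds is used---it is not merely ``helping,'' it is what cancels the factor $k$ in the occurrence count so that the final product still telescopes to $1-\frac{3\delta}{4}$. Without this correction your product bound would be $\prod_d(1-\frac{4(d+3)k\delta}{C^{\cdots}})$ with no compensating $1/k$, and the conclusion would fail for large $k$. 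Once you replace $\arm^o(t)$ by $\tok^o(t)$ in the definition of $G_d(t)$, establish the $k$-step monotonicity above, and carry the factor $k$ through the counting, the rest of your line-by-line translation is correct.
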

\begin{proof}
\revise{
 We let $Q_{t,p}$ be the $p$-th passed arm after $t$-th \topk arm change.  Define $s(p):=s_{\ell}, s_{\ell-1}\leq \tau_p<s_{\ell}$. Consider the following virtual process: when algorithm~\ref{alg:topk} ends, if $Q_{t,p}$ is pulled less than $s(p)$ times, we pull $Q_{t,p}$ again. We pull $Q_{t,p}$ total $s(p)$ times (Noted this is the virtual sampling process). Hence, for all $p\geq 1$, $Q_{t,p}$ will be pulled exactly $s(p)$ times. We use $\phat'_{Q_{t,p}}$ for the estimated mean of $Q_{t,p}$ when it has been pulled $s(p)$ times.  If ${\arm^o(t+1)}=Q_{t,p}$, then $\phat^\prime_{Q_{t,p}}= \phat_{Q_{t,p}}$ holds by definition. 
Let 
\begin{align*}
    \cQ_d(T)=\bigg\{Q_{t,p}: \mu_{Q_{t,p}}\in \bigg(\mu_{\arm^*(k)}-\frac{d\varepsilon}{8}, \mu_{\arm^*(k)}-\frac{(d-1)\varepsilon}{8}\bigg],  p\geq 1, \text{ and } t\in [T] \bigg\},
\end{align*}
where $d$ is an integer and $d\geq 1$.  Fixed a $d\geq 9$, we consider the following cases. 
\\
\textbf{Case 1:} 
Let $G_{d}$ be the event that there exists $t$ and $p$ such that $\hat{\mu}_{\tok^o(t)}\leq \mu_{\arm^*}-\frac{d\eps}{4} -\frac{\eps}{2}$, $Q_{t,p}\in \cQ_{d}(T)$.
When $G_{d}$  is true, 
let $E_{d1}$ be the event that 1: $G_{d}$ occurs, and 2: for the first $t$ and $p$ such that $\hat{\mu}_{\tok^o(t)}\leq \mu_{\arm^*}-\frac{d\eps}{4} -\frac{\eps}{2}$ and $Q_{t,p}\in \cQ_{d}(T)$, it holds that $\arm^o(t+1)=Q_{t,p}$ and $\hat{\mu}_{\arm^{o}(t+1)}\in [\mu_{\arm^o(t+1)}-(d-8)\eps/8, \mu_{\arm^o(t+1)}+(d-8)\eps/8]$.  We note that according to the update rule of the algorithm, if for all $p\geq 1$, $| \phat'_{Q_{t,p}}-{\mu}_{Q_{t,p}} |\leq  \frac{(d-8)\varepsilon}{8}$, then $Q_{t,p}$ will be $\tok^o(t+1)$. Define $F_{t,p}$ be the union of history till $Q_{t,p}$ comes. 
Therefore,
\begin{align}
\label{eq:kEd1}
 \min_{F_{t,p}: G_{d}  \text{ is true under } F_{t,p}}  \PP\bigg\{E_{d1} \ \bigg | \ F_{t,p} \bigg \} 
  & \geq 1-\sum_{p\geq 1} \Pr\bigg(| \phat'_{Q_{t,p}}-{\mu}_{Q_{t,p}} |\leq  \frac{(d-8)\varepsilon}{8}  \bigg) \notag \\
  & \geq 1- \frac{2\delta}{p^2 kC^{d-8}} \notag \\
  & \geq 1- \frac{4\delta}{kC^{d-8}}.
\end{align}
\textbf{Case 2:} 
Let $G_{d}(t)$ be the event that $\hat{\mu}_{\tok^o(t)}> \mu_{\arm^*}-\frac{d\eps}{4} -\frac{\eps}{2}$. 
Let $E_{d2}(t)$ be the event that 1: $G_{d}(t)$ is true, 2: for any $p\geq 1$ and $Q_{t,p}\in\cQ_{d}(T)$, $| \phat'_{Q_{t,p}}-{\mu}_{Q_{t,p}} |< \frac{(d-8)\varepsilon}{8}$. We have
\begin{align}
\label{eq:kEd2}
\PP\bigg\{ E_{d2}(t) \ \bigg | \ G_{d}(t)\bigg\}
    & \geq 1-\PP\bigg\{ \exists p
\geq 1: Q_{t,p} \in \cQ_{d}(T), | \phat'_{Q_{t,p}}-{\mu}_{Q_{t,p}} |\geq  \frac{(d-8)\varepsilon}{8}
 \bigg \} \notag \\
  & \geq 1- \sum_{p\geq 1}\Pr\bigg(| \phat'_{Q_{t,p}}-{\mu}_{Q_{t,p}} |\geq  \frac{(d-8)\varepsilon}{8}  \bigg) \notag \\
  & \geq 1-\sum_{p\geq 1}\frac{2\delta}{p^2k C^{d-8}} \notag \\
  & \geq 1- \frac{4\delta}{kC^{d-8}}.
\end{align}
When $G_{d}(t)$ and $E_{d2}(t)$ are true and $\arm^o(t+1)\in \cQ_{d}(T)$, we have
\begin{align*}
    | \phat_{\arm^o(t+1)}-{\mu}_{\arm^o(t+1)} |< \frac{(d-8)\varepsilon}{8}.
\end{align*}
Fixed a $d<9$, we consider the following cases. 
\\
\textbf{Case 3:} 
Let $E_{d3}$ be the event that 1: $G_{d}$ occurs, and 2: for the first $t$ and $p$ such that $\hat{\mu}_{\tok^o(t)}\leq \mu_{\arm^*}-\frac{d\eps}{4} -\frac{\eps}{2}$ and $Q_{t,p}\in \cQ_{d}(T)$, $\arm^o(t+1)=Q_{t,p}$ and $\hat{\mu}_{\arm^{o}(t+1)}\in [\mu_{\arm^o(t+1)}-d\eps/8, \mu_{\arm^o(t+1)}+d\eps/8]$. 
 We have
\begin{align}
\label{eq:kEd3}
  \min_{F_{t,p}: G_{d}  \text{ is true under } F_{t,p}}   \PP\bigg\{ E_{d3} \ \bigg | \ G_{d} \ \bigg \} & \geq  1-\sum_{p\geq 1}\Pr\bigg(| \phat'_{Q_{t,p}}-{\mu}_{Q_{t,p}} |\leq  \frac{d\varepsilon}{8} \bigg) \notag \\
  & \geq 1- \sum_{p\geq 1}\frac{2\delta}{p^2k C^{d}} \notag \\
  & \geq 1- \frac{4\delta}{kC^{d}}.
\end{align}
\textbf{Case 4:} 
Let $E_{d4}(t)$ be the event that 1: $G_{d}(t)$ is true, 2: for any $p\geq 1$ and $Q_{t,p}\in\cQ_{d}(T)$, $| \phat'_{Q_{t,p}}-{\mu}_{Q_{t,p}} |< \frac{d\varepsilon}{8}$. Then, we have
\begin{align}
\label{eq:kEd4}
   \PP\bigg\{ E_{d4}(t) \ \bigg | \ G_{d}(t)\bigg\}
     \geq &1- \PP\bigg\{ \exists p
\geq 1: Q_{t,p} \in \cQ_{d}(T), | \phat'_{Q_{t,p}}-{\mu}_{Q_{t,p}} |\geq  \frac{d\varepsilon}{8}
 \bigg \} \notag \\
  & \geq 1- \sum_{p\geq 1}\Pr\bigg(| \phat'_{Q_{t,p}}-{\mu}_{Q_{t,p}} |\geq  \frac{d\varepsilon}{8}  \bigg) \notag \\
  & \geq 1-\sum_{p\geq 1}\frac{2\delta}{p^2k C^{d}} \notag \\
  & \geq 1- \frac{4\delta}{kC^{d}}.
\end{align}
When $G_{d}(t)$ and $E_{d4}(t)$ are true and $\arm^o(t+1)\in \cQ_{d}(T)$, then 
\begin{align*}
    | \phat_{\arm^o(t+1)}-{\mu}_{\arm^o(t+1)} |< \frac{d\varepsilon}{8}.
\end{align*}
Define $I$ as a random variable such that
$I=1$ if the following conditions hold.
\begin{enumerate}
    \item For all $d\geq 9$, if $G_{d}$ is true,  so does $E_{d1}$.
    \item For all $d\geq 9$ and $t\in [T]$, if $G_{d}(t)$ is true, so does $E_{d2}(t)$.
    \item For all $d\in [1,8]$, if $G_{d}$ is true, so does $E_{d3}$.
    \item For all $d\in [1,8]$ and $t\in [T]$, if $G_{d}(t)$ is true, so does $E_{d4}(t)$.
\end{enumerate}
Otherwise, $I=0$.  We use $E^c$ for the opposite event of $E$. Similar to the chain rule, we sequentially apply the \eqref{eq:kEd1},\eqref{eq:kEd2},\eqref{eq:kEd3}, and \eqref{eq:kEd4} to compute the probability of $I=1$. 
We have 
\begin{align*}
    \PP(I=1)& \geq  \EE\Bigg[ \prod_{d\geq 9}\bigg( \ind\{G_{d}^c\}+ \ind\{G_{d}\}\PP(E_{d1}\mid G_{d}) \bigg) \prod_{d\geq 9}  \prod_{t\in [T]} \bigg(  \ind\{G_{d}^c(t)\}+\ind\{G_{d}(t)\}\PP(E_{d2}\mid G_{d}(t)) \bigg)\notag \\
    & \quad \times \prod_{d< 9}\bigg( \ind\{G_{d}^c\}+ \ind\{G_{d}\} \PP(E_{d3}\mid G_{d})\bigg) \prod_{d< 9}  \prod_{t\in [T]} \bigg(  \ind\{G_{d}^c(t)\}+\ind\{G_{d}(t)\}\PP(E_{d4}\mid G_{d}(t)) \bigg)  \Bigg]  \notag \\
   &  \geq \prod_{d\geq 9}  \min_{F_{t,p}: G_{d}  \text{ is true under } F_{t,p}}\PP(E_{d1}\mid F_{t,p}) \cdot  \EE\Bigg[\prod_{d\geq 9} \prod_{t\in [T]} \bigg(  \ind\{G_{d}^c(t)\}+\ind\{G_{d}(t)\}\PP(E_{d2}\mid G_{d}(t)) \bigg)\Bigg] \notag \\
    & \quad \times \prod_{d< 9}  \min_{F_{t,p}: G_{d}  \text{ is true under } F_{t,p}} \PP(E_{d3}\mid F_{t,p}) \cdot \EE \Bigg[\prod_{d< 9} \prod_{t\in [T]} \bigg(  \ind\{G_{d}^c(t)\}+\ind\{G_{d}(t)\}\PP(E_{d4}\mid G_{d}(t)) \bigg)\Bigg],
\end{align*}
where the second inequality is due to \begin{align*}
    \EE\Bigg[ \prod_{d\geq 9}\bigg( \ind\{G_{d}^c\}+ \ind\{G_{d}\}\PP(E_{d1}\mid G_{d}) \bigg) \Bigg]\geq   \EE\Bigg[ \prod_{d\geq 9}\bigg( \ind\{G_{d}\}\PP(E_{d1}\mid G_{d}) \bigg) \Bigg] \geq  \prod_{d\geq 9}  \min_{F_{t,p}: G_{d}  \text{ is true under } F_{t,p}}\PP(E_{d1}\mid F_{t,p}).
\end{align*}
Note that $\PP(E_{d2}\mid G_{d}(t))$ can be bounded by \eqref{eq:kEd2} and $\PP(E_{d4}\mid G_{d}(t))$ can be bounded by \eqref{eq:kEd4}. We present an important property of $\tok^o(t)$. %
 For $t\geq k$, $\tok^o(t),\tok^o(t+1),\tok^o(t+2),\cdots,\tok^o(t+k)$ are distinct. Since $\alpha\geq \frac{\varepsilon}{4}$ holds, from the Algorithm, we have
$\phat_{\arm^o(t+r)}\geq\phat_{\tok^o(t+r-1)}+\frac{\varepsilon}{4}$. Till $(t+k)$-th \topk arm change, we have added more than $k$ arms, and for each added $\arm_i$ in $\cA$, we have $\phat_{\arm_i}\geq \phat_{\tok^o(t)}+\frac{\varepsilon}{4}$. Hence,   
$\phat_{\tok^o(t+k)}\geq \phat_{\tok^o(t)}+\frac{\varepsilon}{4}$ holds. For ease of exposition, we assume $\mu_{\arm_1}\leq \mu_{\arm_2}\leq \cdots \leq \mu_{\arm_k}$. Obviously, for $t<k$, we obtain $\phat_{\tok^o(t+k)}\geq \phat_{\tok^o(t)}+\frac{\varepsilon}{4}$. Therefore, for $t\geq 1$,
\begin{align}
    \label{eq:lem2topkbound}
    \phat_{\tok^o(t+k)}\geq \phat_{\tok^o(t)}+\frac{\varepsilon}{4}.
\end{align}
Assume
$I=1$. We now determine the number of occurrences of $G_{d}(t)$. For $I=1$,
 we have $\hat{\mu}_{\tok^{o}(T)}\leq U:= \mu_{\arm^*(k)}+\epsilon/8$. From \eqref{eq:lem2topkbound},  $\phat_{\tok^o(t+k)}\geq \phat_{\tok^o(t)}+\frac{\varepsilon}{4}$.  This means that the number of $\arm^o(t)$ with $\hat{\mu}_{\arm^o(t)}>L:=\mu_{\arm^*(k)}-\frac{d\eps}{4}-\frac{\eps}{2}$ is at most 
\begin{align}
  \frac{4k(U-L)}{\eps}\leq (d+3)k.
\end{align}
Therefore,
\begin{align}
    \Pr(I=1) & \geq \prod_{d=1}^{8} \bigg(1-\frac{4\delta}{kC^{d}} \bigg)\prod_{d=1}^{8} \bigg(1-\frac{4(d+3)\delta}{kC^{d}} \bigg)\prod_{d=9}^{\infty} \bigg(1-\frac{4\delta}{kC^{d-8}}\bigg)\prod_{d=9}^{\infty} \bigg(1-\frac{4(d+3)\delta}{kC^{d-8}} \bigg) \notag \\
    &\geq 1-\sum_{d=1}^{8}\frac{4(d+4)\delta}{C^{d}}-\sum_{d=9}^{\infty} \frac{4(d+4)\delta}{C^{d-8}} \notag\\
    &\geq 1-\frac{3\delta}{4}.
    \end{align}
where the last inequality is due to $C\geq 100$. 
Let $r_t \in \NN$ and satisfies
\begin{align}
\label{eq:clrk}
 \mu_{\arm^o(t)}\in \bigg(\mu_{\arm^*(k)}-\frac{r_t\eps}{8}, \mu_{\arm^*(k)}-\frac{(r_t-1)\eps}{8} \bigg].
\end{align}
Based on the definition of $I$ and $\cQ_d(T)$,  we have 
\begin{align}
    \{I=1\} & \subseteq  \bigg\{\bigcap_{t\geq1} \bigg\{ \bigg\{|\phat_{\arm^o(t)}-\mu_{\arm^o(t)}|\leq\frac{(r_t-8)\varepsilon}{8}\bigg\}\notag \\
    &\qquad \qquad \bigcup   \bigg\{\mu_{\arm^o(t)}\geq \mu_{\arm^*(k)}-\varepsilon\bigg\} \bigg\}\bigg\} \notag \\
     & \subseteq   \bigg\{\bigcap_{t\geq1} \bigg\{\bigg\{\phat_{\arm^o(t)}< \mu_{\arm^*(k)}-\frac{5\varepsilon}{8}\bigg\} \notag \\
    & \qquad \qquad \bigcup\bigg\{\mu_{\arm^o(t)}\geq \mu_{\arm^*(k)}-\varepsilon\bigg\}\bigg\} \bigg\},
\end{align}
where the second formula follows since if $|\phat_{\arm^o(t)}-\mu_{\arm^o(t)}|\leq\frac{(r_t-8)\varepsilon}{8}$, we have
\begin{align}
    &\phat_{\arm^o(t)}\leq \mu_{\arm^o(t)}+\frac{(r_t-8)\varepsilon}{8} \notag \\
    &\leq \mu_{\arm^*(k)}-\frac{(r_t-1)\varepsilon}{8}+ \frac{(r_t-8)\varepsilon}{8} < \mu_{\arm^*(k)}-\frac{5\varepsilon}{8}.
\end{align}
This completes the proof. }
\end{proof}

 \begin{proof}[Proof of Theorem~\ref{thm:top-k}]
\textbf{Proof of Correctness}: 
For $\arm^*(s)$ ($s\in [k]$), we consider two cases. {\it Case I}: For $s\in [k]$, all $\arm^*(s)\in \cA_{T}$. Then the returned set $\cA_T$ is exactly \topk arms. {\it Case II}: Assume $\arm^*(s)\notin \cA_{T}$. Then there exists a $t$ such that 
\begin{align}
\label{eq:theorem-topk-1}
    \phat_{\arm^*(s)}\leq \phat_{\tok^o(t)}+\frac{\varepsilon}{2}. 
\end{align}
From Proposition~\ref{prop:chernoff} and union bound, 
\begin{align}
\label{eq:the1-bound-muotopk}
\Pr\bigg(\phat_{\arm^*(s)} \geq \mu_{\arm^*(s)} -\frac{\varepsilon}{8} \bigg)\geq 1-2k\exp \bigg(-\frac{s_1\varepsilon^2}{8} \bigg)\geq 1- \frac{\delta}{4}.
\end{align}
From \eqref{eq:theorem-topk-1},
\begin{align}
\label{eq:main-correct-2}
    \phat_{\tok^o(T)}\geq\phat_{\tok^o(t)}\geq  \phat_{\arm^*(s)} -\frac{\varepsilon}{2}.
\end{align} 
Combining~\eqref{eq:the1-bound-muotopk}~\eqref{eq:main-correct-2}  together, we have
\begin{align}
\label{eq:proof-correcttopk}
    \Pr\bigg(\phat_{\arm^o(T)}\geq \mu_{\arm^*(k)}-\frac{5\varepsilon}{8}\bigg)\geq 1-\frac{\delta}{4}.
\end{align}
From Lemma~\ref{lem:main-1}, we obtain
\begin{align}
    \Pr\bigg(\bigg\{\phat_{\arm^o(T)}< \mu_{\arm^*(k)}-\frac{5\varepsilon}{8}\bigg\}\bigcup\bigg\{\mu_{\arm^o(T)}\geq \mu_{\arm^*(k)}-\varepsilon\bigg\}\bigg) \geq 1-\frac{3\delta}{4}.
\end{align}
Let $A=\bigg\{\phat_{\arm^o(T)}\geq \mu_{\arm^*(k)}-\frac{5\varepsilon}{8}\bigg\}$ and $B=\bigg\{\mu_{\arm^o(T)}\geq \mu_{\arm^*(k)}-\varepsilon\bigg\}$. Then from~\eqref{eq:proof-correcttopk},  $\Pr(A)\geq 1-\frac{\delta}{4}$. From~\eqref{eq:proof-correcttopk}, $\Pr(A\cap B)\leq \frac{3\delta}{4}$.  Therefore $\Pr(B)\geq \Pr(A)- \Pr(A \cap B)\geq 1-\delta$.

\textbf{Proof of Sample Complexity}: 
the proof of sample complexity for Algorithm~\ref{alg:topk} is almost same as the proof of sample complexity for Algorithm~\ref{alg:main}.  We use the same notations used in the proof of sample complexity in Theorem~\ref{thm:main}. We also firstly consider two cases: {\it case I}: $\phat_{\tok^o(t)}\geq \mu_{A_{t,j}}-\frac{3\varepsilon}{8}$, {\it case II}: $\phat_{\tok^o(t)}< \mu_{A_{t,j}}-\frac{3\varepsilon}{8}$. Similar to the proof of Theorem~\ref{thm:main}, we obtain 
\begin{align}
\label{eq:final1}
    \EE[N_{A_{t,j}}\mid X_{t}(j)]=O\bigg(\frac{1}{\varepsilon^2}\log \frac{k}{\delta}\bigg),
\end{align}
and conditioned on event $\{\alpha=\frac{\varepsilon}{4}\}$ and $X^c_t(j)$
\begin{align}
\label{eq:final2}
    \EE[A_{t,j}=\arm^o(t+1)]\geq \frac{1}{2}.
\end{align}
After $t$-th arm change, assume the arrived arm is $\arm_{i+1}$, using~\eqref{eq:final1}, \eqref{eq:final2} and following the proof of bounding term $W_t$ in Theorem~\ref{thm:main} (only need to replace $\phat_{\arm^o(t)}$ with $\phat_{\tok^o(t)}$), we can obtain
\begin{align}
\label{eq:sample-topk}
    W_t\leq O\bigg(\frac{\log(k/\delta)}{\varepsilon^2}\sum_{r=1}^{n-i}\Pr(O_{t,r})\bigg).   
\end{align}
Similarly, the sample complexity of Algorithm~\ref{alg:topk} is
\begin{align}
     &\sum_{i=1}^{n} \EE[N_{\arm_i}] \notag \\
    =& \EE[N_{\arm_1}]+\sum_{i=1}^{n}\sum_{r=1}^{n-i}\sum_{t=1}^{i} \EE[N_{Q_{t,r}}\ind(\arm^o(t)=\arm_{i},O_{t,r})] \notag \\
     =&\EE[N_{\arm_1}]+\sum_{i=1}^{n}\sum_{t=1}^{i} \bigg(\Pr(\arm^o(t)=\arm_{i})  \notag \\
     & \qquad \qquad \cdot \bigg( \sum_{r=1}^{n-i} \EE[N_{Q_{t,r}} \mid \arm^o(t)=\arm_{i},O_{t,r}] \Pr(O_{t,r} \mid \arm^o(t)=\arm_{i})\bigg) \bigg) \notag \\
     \leq &O\bigg(\frac{\log (k/\delta)}{\varepsilon^2}\bigg)\sum_{i=1}^{n}\sum_{t=1}^{i} \bigg(\Pr(\arm^o(t)=\arm_{i})\sum_{r=1}^{n-i}\Pr(O_{t,r} \mid \arm^o(t)=\arm_{i})\bigg) \notag \\
     \leq &  O\bigg(\frac{\log (k/\delta)}{\varepsilon^2}\bigg)\sum_{i=1}^{n}\sum_{r=1}^{n-i}\sum_{t=1}^{i} \bigg(\Pr(\arm^o(t)=\arm_{i})\Pr(O_{t,r} \mid \arm^o(t)=\arm_{i})\bigg) \notag \\
     = & O\bigg(\frac{n}{\varepsilon^2} \log\big(\frac{k}{\delta}\big)\bigg),
     \end{align}
    where we apply~\eqref{eq:sample-topk} in second inequality. 
\end{proof}

\section{Proof of Theorem \ref{thm:id-BAI}}
\label{appendix-3}

Let $\arm^*$ be the best arm. For $B_r>0$, let $\hat{p}_{*}^{\ell}(r)$ be the estimated mean of $\arm^*$ at $\ell$-th iteration of round $r$.   For $\arm_i\neq \arm^*$, let $r_i=\log_2 (1/(\mu_{\arm^*}-\mu_{\arm_i}))$. Let $\mathcal{E}$ be the event that the best arm is kept in $S_r$ for all $r$, i.e., the returned arm is the best arm.  
      \begin{lemma} [Correctness]
      \label{lem:ins-correc}
      With probability at least $1-\delta/2$, the  returned arm is $\arm^*$.
      \end{lemma}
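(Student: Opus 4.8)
The plan is the standard exponential-gap-elimination correctness argument, specialized to the streaming implementation in Algorithm~\ref{alg:instance-dependent}. For each round $r$ I would introduce a ``good event'' $\cG_r$ bundling three facts: (a) the subroutine call $\arm^o_r\leftarrow\varepsilon\text{-BAI}(\varepsilon_r,\delta_r,S_r)$ returns an $\varepsilon_r$-best arm of $S_r$, which holds with probability at least $1-\delta_r$ by Theorem~\ref{thm:main}; (b) the anchor estimate satisfies $|I_r-\mu_{\arm^o_r}|\le\varepsilon_r/2$; and (c) \emph{every} empirical mean of $\arm^*$ computed during round $r$ — the $\hat p_*^{\ell}(r)$ in the doubling phase, or the single estimate $\hat p_*(r)$ once $B_r\le 0$ — is at least $\mu_{\arm^*}-\varepsilon_r/2$. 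Here $\varepsilon_r=2^{-r}/4$ and $\delta_r=\delta/(40r^2)$ as in the algorithm. The first step is to check, via Proposition~\ref{prop:chernoff} and the prescribed pull counts ($\tfrac{2}{\varepsilon_r^2}\log\tfrac1{\delta_r}$ pulls for $I_r$, and $\tfrac{2^{\ell}}{\varepsilon_r^2}\log\tfrac{40h^2}{\delta_r}$ pulls at iteration $\ell$), that (b) fails with probability $O(\delta_r)$ and (c) fails with probability $O(\delta_r)$ as well: the doubling schedule turns the union bound over $\ell$ into a fast-decaying geometric-type series, and the pass counter $h$ contributes only a harmless extra $1/h^2$ factor. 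Combined with (a), this gives $\Pr(\cG_r^{c})\le c\,\delta_r$ for an absolute constant $c$.

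Given the good events, I would then prove by induction on $r$ that, on $\bigcap_{s\le r}\cG_s$, the best arm is retained, i.e.\ $\arm^*\in S_{r+1}$. The base case $\arm^*\in S_1$ is trivial. For the step, assume $\arm^*\in S_r$. If $\arm^*=\arm^o_r$, then $\arm^*$ is exempt from the elimination loop and survives automatically. Otherwise, (a) gives $\mu_{\arm^o_r}\ge\mu_{\arm^*}-\varepsilon_r$ (since $\arm^*\in S_r$), and as always $\mu_{\arm^o_r}\le\mu_{\arm^*}$, part (b) yields $I_r\le\mu_{\arm^o_r}+\varepsilon_r/2\le\mu_{\arm^*}+\varepsilon_r/2$, hence $I_r-\varepsilon_r\le\mu_{\arm^*}-\varepsilon_r/2$. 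By (c), every estimate of $\arm^*$ in round $r$ is $\ge\mu_{\arm^*}-\varepsilon_r/2\ge I_r-\varepsilon_r$, so the test ``$\hat p^{\ell}_*(r)<I_r-\varepsilon_r$'' never fires and $\arm^*$ remains in $S_{r+1}$. Finally I would take a union bound over all rounds: $\Pr\!\big(\bigcup_{r\ge 1}\cG_r^{c}\big)\le\sum_{r\ge 1}c\,\delta_r=\tfrac{c\delta}{40}\sum_{r\ge 1}\tfrac1{r^2}\le\delta/2$ once the constant $c$ is tracked against the chosen factor $40$; on the complementary event $\arm^*\in S_r$ for every $r$, and since the algorithm returns the single arm remaining in $S_r$, that arm is $\arm^*$.

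The bookkeeping — tuning the confidence radii so the sum over $\ell$ collapses to $O(\delta_r)$ and the leading constant stays small enough that the $\sum_{r\ge 1}1/r^2=\pi^2/6$ factor does not push the bound past $\delta/2$, and verifying the budget-exhausted branch with the same Hoeffding estimate using $\tfrac{2}{\varepsilon_r^2}\log\tfrac{40h^2}{\delta_r}$ pulls — is routine and uses nothing beyond Proposition~\ref{prop:chernoff} and Theorem~\ref{thm:main}. The one genuinely structural point, and the step I expect to need the most care, is that the number of rounds is not bounded in advance, so the union bound must run over all $r\ge 1$; this is precisely why the algorithm schedules $\delta_r=\delta/(40r^2)$ with a summable tail. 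A secondary subtlety is that the randomness of the nested $\varepsilon$-BAI subroutine lives in the same probability space, so the good events must be defined round by round and combined by a plain union bound rather than by conditioning on the subroutine's output.
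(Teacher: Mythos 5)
Your proposal is correct and follows essentially the same route as the paper: per-round concentration of $I_r$ around $\mu_{\arm^o_r}$ and of every estimate of $\arm^*$ (in both the doubling branch and the budget-exhausted branch), an induction showing the test $\hat p^{\ell}_*(r)<I_r-\varepsilon_r$ never fires for $\arm^*$, and a union bound over all rounds using the summable $\delta_r=\delta/(40r^2)$. The only difference is that your good event also includes the $\varepsilon$-BAI guarantee (a), which your own chain of inequalities never uses — the paper needs only the deterministic fact $\mu_{\arm^o_r}\le\mu_{\arm^*}$ — so this is a harmless redundancy that merely inflates the per-round failure probability by a constant the $40r^2$ schedule easily absorbs.
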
                      
\begin{proof}
Assume that at the $r$-th round, $\arm^*\in S_{r}$. If $\arm^o_r$ is the best arm, then from the algorithm, $\arm^o_r$ is kept at $S_r$. Therefore, we focus on the case that $\arm^o_r$ is not the best arm. From Hoeffding bound,  we have
\begin{align}
\label{eq:bound-Ir}
    \Pr(I_r\leq \mu_{\arm^o_r}+\frac{\varepsilon_r}{2}) \geq 1-\frac{\delta_r}{2}.
    \end{align}
For arriving $\arm^*$, we consider two cases. 

Case 1: $B_r>0$. From Hoeffding bound, at the $\ell$-th iteration, 
\begin{align}
\label{eq:bound-pir-1}
    \Pr(|\hat{p}_*^{\ell}(r)-\mu_{\arm^*}|\geq \frac{\varepsilon_r}{2})\leq \bigg(\frac{\delta_r}{20}\bigg)^{2^{\ell-1}}\leq \frac{\delta_r}{20^{\ell}}.
\end{align}
Applying union bound for all iterations $\ell$, we obtain for all $\ell$,
\begin{align*}
     \Pr(|\hat{p}_*^{\ell}(r)-\mu_{\arm^*}|\geq \frac{\varepsilon_r}{2})\leq \frac{\delta_r}{2}.
\end{align*}

Case 2: $B_r<0$. Let $\hat{p}_{*}(r)$ be the estimated mean of $\arm^*$ for this case. From Hoeffding bound, for  arriving $\arm^*$,
\begin{align}
\label{eq:bound-pir-2}
    \Pr(|\hat{p}_*(r)-\mu_{\arm^*}|\geq \frac{\varepsilon_r}{2})\leq \frac{\delta_r}{20}\leq \frac{\delta_r}{2}.
\end{align}
Combining \eqref{eq:bound-Ir}, \eqref{eq:bound-pir-1}, and \eqref{eq:bound-pir-2} together, we have that with probability at least $1-\delta_r$,
\begin{align}
    \hat{p}_*(r)\geq \mu_{\arm^*}-\frac{\varepsilon_r}{2}\geq \mu_{\arm_r^o}-\frac{\varepsilon_r}{2}\geq I_r-\varepsilon_r, \qquad \text{for}  \ \ B_r\leq 0 \\
    \max_{\ell}\hat{p}^{\ell}_*(r)\geq \mu_{\arm^*}-\frac{\varepsilon_r}{2}\geq \mu_{\arm_r^o}-\frac{\varepsilon_r}{2}\geq I_r-\varepsilon_r, \qquad \text{for} \ \ B_r>0.
\end{align}
Hence, with probability at least $1-\delta_r$, $\arm^*$ will be kept in $S_{r+1}$.
Applying union bound,  $\arm^*$  is kept in $S_r$ for all $r$ rounds with probability at least $1-\sum_{r=1}^{\infty}\delta_r=1-\sum_{r=1}^{\infty}(\delta/(40\cdot r^2))\geq 1-\delta/2$. Therefore with probability at least $1-\delta/2$, the returned arm is $\arm^*$.
\end{proof}      
      
\begin{lemma}
\label{lem-ins-1}
 For $\arm_i\neq \arm^*$, let $r_i=\lceil \log_2 (1/(\mu_{\arm^*}-\mu_{\arm_i}))\rceil $,  then  $\Pr(\arm_i\in S_{r_i+t} \mid \mathcal{E})\leq 1/10^t$.
\end{lemma}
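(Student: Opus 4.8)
The plan is to reduce the bound to a round‑by‑round elimination argument and then multiply the per‑round failure probabilities over the $t$ rounds following $r_i$. The role of $r_i$ enters only through the estimate $\mu_{\arm^*}-\mu_{\arm_i}\ge 4\varepsilon_r$ for every $r\ge r_i$: since $r_i=\lceil\log_2(1/(\mu_{\arm^*}-\mu_{\arm_i}))\rceil\ge\log_2(1/(\mu_{\arm^*}-\mu_{\arm_i}))$ we get $2^{-r_i}\le\mu_{\arm^*}-\mu_{\arm_i}$, and for $r\ge r_i$ the scale $\varepsilon_r=2^{-r}/4\le 2^{-r_i}/4\le(\mu_{\arm^*}-\mu_{\arm_i})/4$. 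We may also assume $\arm_i\in S_r$ for all $r\le r_i+t$, since otherwise $\arm_i$ was removed in an earlier round and the probability in question is $0$.

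For a round $r\ge r_i$ I would define the bad event $B_r$ as the union of three subevents: (a) the call $\varepsilon\text{-BAI}(\varepsilon_r,\delta_r,S_r)$ on Line~\ref{alg:line} fails to return an $\varepsilon_r$-best arm of $S_r$; (b) the estimate $I_r$ of $\arm^o_r$ satisfies $I_r<\mu_{\arm^o_r}-\varepsilon_r/2$; (c) the first empirical mean of $\arm_i$ formed in round $r$ (that is, $\hat p_i(r)$ of Line~\ref{line-alg3-77} when $B_r\le0$, or $\hat p_i^1(r)$ of Line~\ref{line-alg3-7} when $B_r>0$) exceeds $\mu_{\arm_i}+\varepsilon_r/2$. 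Theorem~\ref{thm:main} bounds $\Pr(\text{a})$ by $\delta_r$; Proposition~\ref{prop:chernoff} applied to the $\frac{2}{\varepsilon_r^2}\log(1/\delta_r)$ pulls behind $I_r$ and to the $\frac{2}{\varepsilon_r^2}\log(40/\delta_r)$ pulls behind the first estimate of $\arm_i$ bounds $\Pr(\text{b})$ and $\Pr(\text{c})$ by $O(\delta_r)$ each. Hence for every realization of the history $\cF_{r-1}$ preceding round $r$, $\Pr(B_r\mid\cF_{r-1})\le\rho$ for a small absolute constant $\rho$ (one may take $\rho=1/20$, since $\delta_r=\delta/(40r^2)\le1/40$ and the constants in Algorithm~\ref{alg:instance-dependent} leave room to spare).

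Next I would prove the deterministic implication: on $\{\arm^*\in S_r\}\cap\{\arm_i\in S_r\}\cap B_r^c$ with $r\ge r_i$, arm $\arm_i$ is removed in round $r$. From (a)$^c$ and $\arm^*\in S_r$, $\mu_{\arm^o_r}\ge\mu_{\arm^*}-\varepsilon_r>\mu_{\arm_i}$, so $\arm^o_r\ne\arm_i$ and $\arm_i$ is indeed processed; from (b)$^c$, $I_r\ge\mu_{\arm^o_r}-\varepsilon_r/2\ge\mu_{\arm^*}-\frac32\varepsilon_r$, so $I_r-\varepsilon_r\ge\mu_{\arm^*}-\frac52\varepsilon_r\ge\mu_{\arm_i}+4\varepsilon_r-\frac52\varepsilon_r=\mu_{\arm_i}+\frac32\varepsilon_r$; and from (c)$^c$ the first estimate $\hat p_i<\mu_{\arm_i}+\varepsilon_r/2<I_r-\varepsilon_r$, so the elimination test $\hat p_i<I_r-\varepsilon_r$ fires at the first iteration. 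Since $S_1\supseteq S_2\supseteq\cdots$ and $\cE\subseteq\{\arm^*\in S_r\}$ for every $r$, taking contrapositives gives $\{\arm_i\in S_{r_i+t}\}\cap\cE\subseteq\bigcap_{r=r_i}^{r_i+t-1}B_r$. Therefore $\Pr(\arm_i\in S_{r_i+t},\cE)\le\prod_{r=r_i}^{r_i+t-1}\Pr\big(B_r\mid\bigcap_{r'<r}B_{r'}\big)\le\rho^t\le20^{-t}$, and dividing by $\Pr(\cE)\ge1-\delta/2\ge1/2$ (Lemma~\ref{lem:ins-correc}) yields $\Pr(\arm_i\in S_{r_i+t}\mid\cE)\le 2\cdot20^{-t}\le 10^{-t}$ for $t\ge1$.

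The step I expect to be most delicate is (b)--(c), namely verifying that the relevant estimates are $(\varepsilon_r/2)$-accurate with probability $1-O(\delta_r)$: in the budgeted branch ($B_r>0$) one must notice that it suffices to use the very first iteration (which is always executed and uses $\frac{2}{\varepsilon_r^2}\log(40/\delta_r)$ pulls), so no union bound over the later, data-dependent iterations is needed. A second point worth stating carefully is that $\cE$ is a global event, so the chaining over rounds must be routed through the per-round events $\{\arm^*\in S_r\}$ (which $\cE$ implies), at the harmless cost of the factor $1/\Pr(\cE)\le2$ collected at the end.
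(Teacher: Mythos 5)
Your proof is correct and follows essentially the same route as the paper's: bound the probability that $\arm_i$ survives any single round $r\ge r_i$ by a small constant (via the $\eps_r$-optimality of $\arm^o_r$, the accuracy of $I_r$, and the accuracy of the first estimate of $\arm_i$, all at scale $\eps_r\le(\mu_{\arm^*}-\mu_{\arm_i})/4$), then chain this over the $t$ rounds following $r_i$. Your handling of the conditioning is in fact more careful than the paper's: the paper asserts the per-round Hoeffding and Theorem~\ref{thm:main} guarantees conditionally on the global event $\cE$ without justification, whereas you route everything through the per-round events $\{\arm^*\in S_r\}$ (implied by $\cE$) and pay the harmless factor $1/\Pr(\cE)\le 2$ at the end. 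The only soft spot is the numerical claim $\rho\le 1/20$: a literal application of Proposition~\ref{prop:chernoff} to your three subevents gives a per-round failure probability closer to $3\delta_r\le 3/40$, so the extra factor of $2$ is not quite absorbed at $t=1$; this costs only the base of the geometric decay (e.g.\ $1/8^t$ in place of $1/10^t$), which still suffices for the downstream use in Lemma~\ref{lem:ins-comp}.
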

\begin{proof}
Assume that at the $(r_i+t)$-th round,  $\arm_i$ is kept in $S_r$.
Conditioned on event  $\mathcal{E}$, from Theorem~\ref{alg:main}, with probability $1-\delta_{r_i+t}$, 
\begin{align}
\label{eq:ins-sample-1}
 \mu_{\arm^o_{r_i+t}}\geq \mu_{\arm^*}-{\eps_{r_i+t}} \geq \mu_{\arm^*}-{\eps_{r_i}}=\mu_{\arm^*}- \frac{\mu_{\arm^*}-\mu_{\arm_i}}{4}.
\end{align}
From Hoeffding bound,  with probability $1-\delta_{r_i+t}$
\begin{align}
\label{eq:ins-sample-2}
    I_r\geq \mu_{\arm^o_{r_i+t}}-{\eps_{r_i+t}}\geq  \mu_{\arm^o_{r_i+t}}- \frac{\mu_{\arm^*}-\mu_{\arm_i}}{4}.
\end{align}
Again for $\arm_i$, from Hoeffding bound and union bound, we have that conditioned on $\mathcal{E}$, with probability $1-\delta_{r_i+t}$,
\begin{align}
\label{eq:ins-sample-3}
  &  \hat{p}_i(r_i+t)\leq \mu_{\arm_i}+\eps_{r_i+t} \leq \mu_{\arm_i}+\eps_{r_i} \leq \mu_{\arm_i}+ \frac{\mu_{\arm^*}-\mu_{\arm_i}}{4}, \qquad \text{for} \  \ B_r\leq 0; \notag \\
   & \max_{\ell}\hat{p}_i^{\ell}(r_i+t) \leq \mu_{\arm_i}+ \frac{\mu_{\arm^*}-\mu_{\arm_i}}{4}, \qquad \text{for} \  \ B_r>0.
\end{align}
Combining \eqref{eq:ins-sample-1}, \eqref{eq:ins-sample-2}, and \eqref{eq:ins-sample-3} together, we have that with probability $1-3\delta_{r_i+t}\geq 1-\delta/10$,
\begin{align}
    \hat{p}_i(r_i+t)\leq \mu_{\arm_i}+ \frac{\mu_{\arm^*}-\mu_{\arm_i}}{4} &=\mu_{\arm^*}- \frac{3(\mu_{\arm^*}+\mu_{\arm_i})}{4}\leq I_r-\frac{\mu_{\arm^*}+\mu_{\arm_i}}{4}\leq I_r-\eps_{r_i+t},  \qquad \text{for} \  \ B_r<0; \label{eq:in-cam} \\
   & \max_{\ell}\hat{p}_i^{\ell}(r_i+t)\leq  I_r-\eps_{r_i+t}, \ \ \text{for} \  \ B_r>0 \notag
\end{align}
where the second inequality in \eqref{eq:in-cam} is due to 
\begin{align*}
    I_r \geq \mu_{\arm^o_{r_i+t}}- \frac{\mu_{\arm^*}-\mu_{\arm_i}}{4}\geq \mu_{\arm^*}-\frac{\mu_{\arm^*}-\mu_{\arm_i}}{2}.
\end{align*}
Hence,  $\Pr(\arm_i\in S_{r_i+t+1}\mid \arm_i \in S_{r_i+t},\mathcal{E} )\leq \delta/10$. We finally have 
\begin{align*}
    \Pr(\arm_i\in S_{r_i+t} \mid\mathcal{E})&\leq     \Pr(\arm_i\in S_{r_i+t}\mid\mathcal{E})/     \Pr(\arm_i\in S_{r_i}\mid\mathcal{E}) \\ \notag
    &=\prod_{s=0}^{t-1} \Pr(\arm_i\in S_{r_i+s+1}\mid \arm_i \in S_{r_i+s},\mathcal{E}) \\
    &
\leq \delta/10^t,
\end{align*}
where the second equality we use the fact that $\Pr(A\mid B,C)=\Pr(A,B\mid C)/\Pr(B\mid C)$ and $\Pr(\arm_i\in S_{r_i+s+1}, \arm_i \in S_{r_i+s}\mid\mathcal{E} )=\Pr(\arm_i\in S_{r_i+s+1}\mid\mathcal{E} )$.
\end{proof}

 \begin{lemma} 
   \label{lem:ins-comp}
Conditioned on event $\mathcal{E}$, the expected number of pulls of Algorithm~\ref{alg:instance-dependent} is $O\big(\sum_{i=2}^{n}\frac{1}{\Delta_i^2} \log \big(\frac{1}{\delta}\log\frac{1}{\Delta_i} \big) \big)$.
 \end{lemma}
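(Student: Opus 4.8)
Throughout I condition on $\mathcal{E}$. The plan is to bound the pulls round by round and then aggregate with Lemma~\ref{lem-ins-1}. The first claim to establish is that, given the history before round $r$ (in particular $S_r$) and given that round $r$ takes place, the expected number of pulls in round $r$ is $O\!\big(\tfrac{|S_r|}{\varepsilon_r^2}\log\tfrac1{\delta_r}\big)$. This splits into three contributions: (i) the call $\varepsilon\text{-BAI}(\varepsilon_r,\delta_r,S_r)$, which by Theorem~\ref{thm:main} costs $O\!\big(\tfrac{|S_r|}{\varepsilon_r^2}\log\tfrac1{\delta_r}\big)$ pulls in expectation; (ii) evaluating $\arm^o_r$, exactly $\tfrac{2}{\varepsilon_r^2}\log\tfrac1{\delta_r}$ pulls; and (iii) screening the arms of $S_r\setminus\{\arm^o_r\}$. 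For (iii) the budget $B_r=\tfrac{6|S_r|}{\varepsilon_r^2}\log\tfrac{40}{\delta_r}$ is the crucial device: while $B_r>0$ an arm is refined by geometric doubling and $B_r$ is charged for every such pull, so the total number of doubling-phase pulls over all arms is at most $B_r$ plus the overshoot of the single arm that depletes it; once $B_r\le0$ each remaining arm (at most $|S_r|$ of them) is pulled a further $\Theta(\tfrac1{\varepsilon_r^2}\log\tfrac1{\delta_r})$ times with the fixed confidence $\delta_r$. Summing these yields the per-round bound. I expect (iii) to be the main obstacle: the per-arm doubling cost carries a $\log h$ factor in the running count $h$ of eliminated arms, and one has to argue that the budget is used up before these $\log h$ terms can pile up into an extra $\log|S_r|$ factor.

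Given the per-round bound, write $R$ for the (random) number of rounds and $\mathrm{cost}_r$ for the pulls in round $r$, so that
\[
\EE\!\Big[\textstyle\sum_{r\ge1}\mathrm{cost}_r\,\ind\{r\le R\}\ \Big|\ \mathcal{E}\Big]\ \le\ c\sum_{r\ge1}\frac{\log(1/\delta_r)}{\varepsilon_r^2}\;\EE\big[|S_r|\,\ind\{r\le R\}\ \big|\ \mathcal{E}\big],
\]
where the constant $c$ absorbs both the lower-order terms in the per-round bound and the at-most-factor-$2$ loss from conditioning on $\mathcal{E}$, valid since $\Pr(\mathcal{E})\ge1-\delta/2$ by Lemma~\ref{lem:ins-correc}. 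The key structural remark is that Algorithm~\ref{alg:instance-dependent} enters round $r$ only if $|S_{r-1}|\ge2$; since under $\mathcal{E}$ the arm $\arm^*$ is never removed, the event $\{\arm^*\in S_r\}$ already entails that some suboptimal arm lies in $S_{r-1}$. Therefore
\[
\EE\big[|S_r|\,\ind\{r\le R\}\ \big|\ \mathcal{E}\big]\ \le\ 2\sum_{i:\,\arm_i\neq\arm^*}\Pr\big(\arm_i\in S_{r-1}\ \big|\ \mathcal{E}\big),
\]
which is what keeps the otherwise divergent series $\sum_r\varepsilon_r^{-2}\log(1/\delta_r)$ under control.

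Finally I would swap the order of summation and treat each suboptimal arm $\arm_i$ on its own, splitting the sum over $r$ at $r_i=\lceil\log_2(1/(\mu_{\arm^*}-\mu_{\arm_i}))\rceil$. For $r\le r_i+1$ I bound $\Pr(\arm_i\in S_{r-1}\mid\mathcal{E})\le1$; since $\varepsilon_r^{-2}$ grows geometrically in $r$, this head is dominated by its last term and equals $O\!\big(\tfrac1{\varepsilon_{r_i}^2}\log\tfrac1{\delta_{r_i}}\big)=O\!\big(\tfrac1{\Delta_i^2}\log\!\big(\tfrac1\delta\log\tfrac1{\Delta_i}\big)\big)$, using $\varepsilon_{r_i}=\Theta(\Delta_i)$, $\delta_{r_i}=\delta/(40r_i^2)$ and $r_i=\Theta(\log\tfrac1{\Delta_i})$. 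For $r=r_i+1+t$ with $t\ge1$, Lemma~\ref{lem-ins-1} gives $\Pr(\arm_i\in S_{r-1}\mid\mathcal{E})\le10^{-t}$, and since $\varepsilon_{r_i+1+t}^{-2}$ is proportional to $4^{r_i}4^{t}$ while $(4/10)^t$ decays geometrically (overwhelming the slowly growing logarithmic factor), the tail sums to $O\!\big(\tfrac1{\Delta_i^2}\log\!\big(\tfrac1\delta\log\tfrac1{\Delta_i}\big)\big)$ as well. Adding these over $i=2,\dots,n$, and noting that round $r=1$ always occurs but costs only $O(n\log\tfrac1\delta)$ since $\varepsilon_1=\Theta(1)$ (absorbed into the sum), gives the stated bound. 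The one genuinely delicate ingredient is the per-round accounting of the budget in step (iii); the remainder is a geometric-series estimate assembled via Lemma~\ref{lem-ins-1}.
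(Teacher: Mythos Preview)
Your proposal is correct and follows essentially the same route as the paper: bound the round-$r$ cost by $O(|S_r|\varepsilon_r^{-2}\log(1/\delta_r))$, split the per-arm sum at $r_i$, and kill the tail with the $(4/10)^t$ decay from Lemma~\ref{lem-ins-1}. Your explicit accounting for step~(iii) via the budget $B_r$ (doubling pulls capped by $B_r$ plus one arm's overshoot, then a flat $\Theta(\varepsilon_r^{-2}\log(1/\delta_r))$ per remaining arm) is in fact more careful than the paper, which simply asserts ``each arm costs $O(\varepsilon_r^{-2}\log(1/\delta_r))$ in expectation'' without addressing the $\log h$ factor; your observation that the budget is precisely the device that prevents a spurious $\log|S_r|$ is the right explanation.
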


\begin{proof}
    The proof of sample complexity follows the similar idea as \citet{karnin2013almost}. Note that in the $r$-th round, each arm costs $O(\frac{1}{\eps_r^2}\log \frac{1}{\delta_r})$ samples in expectation. Without loss of generality, we assume $\arm_i$ is pulled $O\big(\frac{1}{\eps_r^2}\log \frac{1}{\delta_r}\big)$ times in the $r$-th round. We consider $r$ in two cases, \ie $r\in[1,r_i)$ and $r\geq r_i$. For $r\in [1,r_i)$, the  total sample cost of $\arm_i$ is $O(\frac{1}{\eps_{r_i}^2})\log\big(\frac{1}{\delta_{r_i}}\big)=O\big(\frac{1}{\Delta_i^2}\log \big({\delta}{\log \Delta_i^{-1}}\big)\big)$.  Note that for $(r_i+t)$-th round, the total number of pulls of $\arm_i$ is $O\big(\frac{1}{\eps_{r+t}^2}\cdot \log\big(\frac{1}{\delta_{r+t}}\big)\big) =O(\frac{4^t}{\eps_{r}^2}\cdot \log\big(\frac{1}{\delta_{r}} \big))$.
    For $r>r_i$, applying Lemma \ref{lem-ins-1}, the sample cost of $\arm_i$ is bounded by
   $O\big(\frac{1}{\Delta_i^2}\log \big({\delta}{\log \Delta_i^{-1}}\big)\big)\cdot \sum_{t=1}^{\infty} \frac{4^t}{10^t}=O\big(\frac{1}{\Delta_i^2}\log \big({\delta}{\log \Delta_i^{-1}}\big)\big)$. Therefore, conditioned on event $\cE$, the expected number of pulls is $O\big(\sum_{i=2}^{n}\frac{1}{\Delta_i^2} \log \big(\frac{1}{\delta}\log\frac{1}{\Delta_i} \big) \big)$.
\end{proof}
 
 \begin{lemma}
  \label{lem:ins-pass}
Conditioned on event $\cE$, the expected number of passes used in Algorithm~\ref{alg:instance-dependent} is $O(\log \Delta_{2}^{-1})$.
 \end{lemma}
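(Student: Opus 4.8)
\noindent\emph{Plan of proof of Lemma~\ref{lem:ins-pass}.}\quad
The plan is to bound the expected number of \emph{rounds} of Algorithm~\ref{alg:instance-dependent} (conditioned on $\cE$) and then invoke the fact --- already noted when the algorithm is presented --- that each round costs $O(1)$ passes: one pass to run the subroutine $\varepsilon$-BAI (Algorithm~\ref{alg:main}), during which $\arm^o_r$ is held in the single-arm memory and then pulled $\frac{2}{\varepsilon_r^2}\log\frac{1}{\delta_r}$ more times, and one pass to test the arms of $S_r\setminus\{\arm^o_r\}$. Writing $R$ for the number of rounds, it therefore suffices to prove $\EE[R\mid\cE]=O(\log\Delta_2^{-1})$.

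I would split the rounds according to whether $\varepsilon_r$ exceeds $\Delta_2/3$. Let $\rho^\star$ be the smallest index with $\varepsilon_{\rho^\star}=2^{-\rho^\star}/4\le\Delta_2/3$, so that $\rho^\star=O(\log\Delta_2^{-1})$. Rounds $1,\dots,\rho^\star-1$ happen deterministically and contribute only $O(\rho^\star)=O(\log\Delta_2^{-1})$ passes (this is Case~1 of the informal discussion); the remaining task is to show that once $\varepsilon_r\le\Delta_2/3$ only $O(1)$ further rounds occur in expectation.

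The crux is the claim: for every $\rho\ge\rho^\star$, conditioned on $\cE$ and on the algorithm reaching round $\rho$ (with an arbitrary surviving set $S_\rho$), round $\rho$ removes \emph{all} suboptimal arms of $S_\rho$ with probability at least $1-O(\delta_\rho)$. Granting this, note that on $\cE$ the optimal arm is never removed, so purging all suboptimal arms in round $\rho$ yields $|S_{\rho+1}|=1$ and termination; hence $\{R\ge\rho+2\}$ forces round $\rho$ to fail, and $\Pr(R\ge\rho+2\mid\cE)\le O(\delta_\rho)$ for every $\rho\ge\rho^\star$. Since $\delta_\rho=\delta/(40\rho^2)$ gives a summable tail,
\begin{align*}
\EE[R\mid\cE]=\sum_{m\ge1}\Pr(R\ge m\mid\cE)\ \le\ (\rho^\star+1)+\sum_{\rho\ge\rho^\star}\Pr(R\ge\rho+2\mid\cE)\ =\ O(\log\Delta_2^{-1}),
\end{align*}
and multiplying by the $O(1)$ passes per round completes the argument.

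Proving the claim is the main obstacle, and I would do it by reusing the concentration bounds already established for Lemma~\ref{lem:ins-correc} and Lemma~\ref{lem-ins-1}. Fix $\rho\ge\rho^\star$. By Theorem~\ref{thm:main} applied to the sub-stream $S_\rho$, and since $\cE$ keeps $\arm^*\in S_\rho$, with probability $1-\delta_\rho$ we have $\mu_{\arm^o_\rho}\ge\mu_{\arm^*}-\varepsilon_\rho$; by Hoeffding's inequality on the $\frac{2}{\varepsilon_\rho^2}\log\frac{1}{\delta_\rho}$ pulls of $\arm^o_\rho$, with probability $1-O(\delta_\rho)$ we have $I_\rho\ge\mu_{\arm^o_\rho}-\varepsilon_\rho/2$; and, exactly as in the analysis behind Lemma~\ref{lem:ins-correc}/Lemma~\ref{lem-ins-1}, the budget $B_\rho=\frac{6|S_\rho|}{\varepsilon_\rho^2}\log\frac{40}{\delta_\rho}$ together with the $h^2$-scaled confidence $\log\frac{40h^2}{\delta_\rho}$ used inside the doubling loop ensures that, with probability $1-O(\delta_\rho)$, every suboptimal $\arm_i\in S_\rho$ gets, at some iteration $\ell$ of round $\rho$, enough pulls that its estimate satisfies $\hat p_i^{\ell}(\rho)\le\mu_{\arm_i}+\varepsilon_\rho/2$ (and similarly for $\hat p_i(\rho)$ when $B_\rho\le0$); the role of the $h^2$ weighting is precisely to keep this union bound over arms at $O(\delta_\rho)$ instead of $O(|S_\rho|\,\delta_\rho)$. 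On the intersection of these events, for any suboptimal $\arm_i\in S_\rho$,
\begin{align*}
I_\rho-\hat p_i^{\ell}(\rho)\ \ge\ \Big(\mu_{\arm^o_\rho}-\tfrac{\varepsilon_\rho}{2}\Big)-\Big(\mu_{\arm_i}+\tfrac{\varepsilon_\rho}{2}\Big)\ \ge\ (\mu_{\arm^*}-\mu_{\arm_i})-2\varepsilon_\rho\ \ge\ \Delta_2-2\varepsilon_\rho\ \ge\ \varepsilon_\rho,
\end{align*}
using $\Delta_2\ge3\varepsilon_\rho$ in the last step, so $\arm_i$ is eliminated in round $\rho$; this proves the claim and hence the lemma. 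The only genuinely delicate point is this union bound over the (possibly many) surviving suboptimal arms; everything else is bookkeeping of the halving of $\varepsilon_r$ and the summation of a $\sum_\rho\delta/\rho^2$ tail.
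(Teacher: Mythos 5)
Your overall architecture matches the paper's: split the rounds at $\varepsilon_r\approx\Delta_2/3$, charge $O(\log\Delta_2^{-1})$ passes to the first phase, and argue that once $\varepsilon_r\le\Delta_2/3$ a single round eliminates all suboptimal arms with good probability, so only $O(1)$ further rounds occur. The gap is in your key claim that round $\rho$ succeeds with probability $1-O(\delta_\rho)$; this is stronger than what the algorithm supports, and your tail-sum computation $\sum_{\rho\ge\rho^\star}\Pr(R\ge\rho+2\mid\cE)\le\sum_\rho O(\delta_\rho)$ relies on it. The obstruction is the budget: the event $\cE_1$ that $B_\rho$ stays positive throughout round $\rho$ is controlled in the paper only via Markov's inequality applied to the expected total number of pulls, which yields a \emph{constant} bound ($\Pr(\cE_1)\ge 4/5$ in the paper's accounting), not $1-O(\delta_\rho)$. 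Moreover, on $\cE_1^c$ the algorithm switches to the fixed-pull branch, where each surviving arm is pulled $\frac{2}{\varepsilon_\rho^2}\log\frac{40}{\delta_\rho}$ times with per-arm failure probability $\delta_\rho/20$ and no $h^2$ weighting, so the union bound over the surviving arms is $|S_\rho|\delta_\rho/20$, not $O(\delta_\rho)$; your parenthetical ``and similarly for $\hat p_i(\rho)$ when $B_\rho\le 0$'' is exactly where the argument breaks.

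The lemma is still provable with your structure, and the repair is small: a constant per-round success probability suffices. The paper shows that each round $\rho$ with $\varepsilon_\rho\le\Delta_2/3$ purges all suboptimal arms with probability at least $1-\delta_\rho-1/5\ge 3/5$ conditionally on the history, so the number of rounds beyond $\rho^\star$ is dominated by a geometric random variable and $\Pr(R\ge\rho^\star+t\mid\cE)$ decays geometrically in $t$; replacing your summable-$\delta_\rho$ tail by this geometric tail gives $\EE[R\mid\cE]\le\rho^\star+O(1)$ and hence the stated $O(\log\Delta_2^{-1})$ passes. (A minor bookkeeping point you omit and the paper addresses: when $\arm^o_\rho$ is itself suboptimal it is not tested against $I_\rho$ in round $\rho$ and survives to round $\rho+1$, costing one additional round.)
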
     
\begin{proof}
We divide the $\varepsilon_r$ into two parts.
The first part is $\varepsilon_r\in [\Delta_2/3, 1]$. In this part,  we need at most $\log \Delta_2^{-1}$ rounds. For each round, we need to run \eps-BAI one time and use $O(1)$ passes.
Thus, for the first part, we need $O(\log \Delta_2^{-1} )$ rounds. 

The second part is $\eps_r < \Delta_2/3$. For $\varepsilon_r < \Delta_2/3$, conditioned on $\cE$,  $\arm^o_r$ is the best arm. Applying Hoeffding bound, with probability at least $1-\delta_r$, $I_r\geq \mu_{\arm_r^o}-\Delta_2/6\geq \mu_{\arm^*}-\Delta_2/2$ holds. Let $\cE_1$ be the event that $B_r>0$  holds through the $r$-th round. We first assume $\cE_1$ holds. 
For $\arm_i$ in the stream, applying Hoeffding bound, $\hat{\mu}_{\arm_i}\leq \mu_{\arm_i}+\Delta_i/6\leq \mu_{\arm^*}-5\Delta_2/6\leq I_r-\varepsilon_r$ holds with probability at least $1-\delta_r/(40h^2)$ and thus $\arm_i$ will be removed.  Applying union bound, total probability is at least $1-\sum_{h=1}\frac{\delta_r}{40h^2}\geq 1-\delta_r$.  Next, we show that $\cE_1$ holds with high probability.  Note that $B_r=\frac{6|S_r|}{\varepsilon_r^2}\log\big(\frac{40}{\delta_r} \big)$. 
From \eqref{eq:bound-pir-1}, we know that with probability at least $1-{\delta_r}/{20^\ell}$, $\arm_i$ will be eliminated at the $\ell$-th iteration. Therefore, the expected number of pulls of $\arm_i$  is bounded by 
\begin{align*}
    X_i\leq M+\sum_{\ell=1}^{\infty}\frac{2^{\ell}M}{20
    ^{\ell}}\leq \frac{6M}{5},
\end{align*}
where $M=\frac{2}{\varepsilon^2}\log\big(\frac{40}{\delta_r}\big)$. From Markov inequality,
\begin{align*}
    \Pr(\cE_1)\geq\Pr\bigg(\sum_{\arm_i\in S_r}X_i\leq B_r\bigg)=1-\Pr\bigg(\sum_{\arm_i\in S_r}X_i\geq B_r\bigg) \geq 1- \frac{\EE\bigg[\sum_{\arm_i\in S_r}X_i\bigg]}{B_r}\geq \frac{4}{5}.
\end{align*}
Therefore,  conditioned on $\cE$, there are two cases. If $\arm^o_r$ is an optimal arm. Then all suboptimal arms  will be eliminated at the round $r$ with probability at least $1-\delta_r-1/5$.  Note that $1-\delta_r-1/5\geq 3/5$, the number of passes for second part is $O(1)$. If $\arm^o_r$ is a suboptimal arm, all suboptimal arms  will be eliminated at round $r$ with probability at least $1-\delta_r-1/5$ except for $\arm^o_r$. Obviously, conditioned on $\cE$, $\arm^o_r$ will be eliminated in the next round with  high probability. Thus, the number of passes is $O(1)$.
\end{proof}

Combining Lemma \ref{lem:ins-correc},  \ref{lem:ins-comp} and \ref{lem:ins-pass}, we get Theorem \ref{thm:id-BAI}.



\end{document}